\pgfplotsset{compat=1.18}
\definecolor{purple}{HTML}{c994c7}
\definecolor{navyblue}{RGB}{30,130,255}
\definecolor{citecolor}{RGB}{30,130,255}
\definecolor{lightgray}{gray}{0.9}
\definecolor{blanchedalmond}{rgb}{1.0, 0.92, 0.8}
\definecolor{cerise}{rgb}{0.871, 0.192, 0.388}
\definecolor{TaskBG}{HTML}{EFE6FF}        
\definecolor{StateBG}{HTML}{F5F5F7}       
\definecolor{ExpertBG}{HTML}{EAF7EA}      
\definecolor{IWMBG}{HTML}{FDECF3}         
\definecolor{SRBG}{HTML}{E6F2FF}          
\newtcolorbox{promptbox}[2][]{%
  enhanced, breakable, colframe=black!12, colback=white, boxrule=2.5pt,
  arc=2pt, left=0pt, right=0pt, top=0pt, bottom=0pt,
  title={#2}, fonttitle=\bfseries, coltitle=black, #1}
\newcommand{\EXrow}[3]{\rowcolor{#1}\textbf{#2} & #3\\}
\newcommand{\tabref}[1]{Table~\ref{#1}}
\newcommand{\appref}[1]{Appendix~\ref{#1}}
\newcommand{\figref}[1]{Fig.~\ref{#1}}
\newcommand{\secref}[1]{\S\ref{#1}}
\newcommand{\lemref}[1]{Lemma~\ref{#1}}
\newcommand{\thmref}[1]{Theorem~\ref{#1}}
\newtheorem{theorem}{Theorem}
\newtheorem{lemma}{Lemma}
\newcommand{\KL}{\mathrm{KL}}
\newcommand{\TV}{\mathrm{TV}}
\renewcommand{\figref}[1]{Fig.~\ref{#1}}
\renewcommand{\secref}[1]{\S\ref{#1}}
\newcommand{\alg}{\textsc{DreamGym}\xspace}
\title{Scaling Agent Learning via Experience Synthesis}
\author[1,3,*]{Zhaorun Chen}
\author[1]{Zhuokai Zhao}
\author[1]{Kai Zhang}
\author[2]{Bo Liu}
\author[1]{Qi Qi}
\author[1]{Yifan Wu}
\author[1]{Tarun Kalluri}
\author[1]{Sara Cao}
\author[1]{Yuanhao Xiong}
\author[3]{Haibo Tong}
\author[4]{Huaxiu Yao}
\author[1]{Hengduo Li}
\author[1]{Jiacheng Zhu}
\author[2]{Xian Li}
\author[5]{Dawn Song}
\author[3]{Bo Li}
\author[2,\dagger]{Jason Weston}
\author[1,\dagger]{Dat Huynh}
\affiliation[1]{Meta Superintelligence Labs}
\affiliation[2]{FAIR at Meta}
\affiliation[3]{University of Chicago}
\affiliation[4]{UNC}
\affiliation[5]{UC Berkeley}
\abstract{
While reinforcement learning (RL) can empower autonomous agents by enabling self-improvement through interaction, its practical adoption remains challenging due to costly rollouts, limited task diversity, unreliable reward signals, and infrastructure complexity, all of which obstruct the collection of scalable experience data.
%
%
To address these challenges, we introduce \alg, the first unified framework designed to synthesize diverse experiences with scalability in mind to enable effective online RL training for autonomous agents.
%
%
Rather than relying on expensive real-environment rollouts, \alg distills environment dynamics into a reasoning-based experience model that derives consistent state transitions and feedback signals through step-by-step reasoning, 
enabling scalable agent rollout collection for RL.
%
To improve the stability and quality of transitions, \alg leverages an experience replay buffer initialized with offline real-world data and continuously enriched with fresh interactions to actively support agent training.
To improve knowledge acquisition, \alg adaptively generates new tasks that challenge the current agent policy, enabling more effective online curriculum learning.
Experiments across diverse environments and agent backbones demonstrate that \alg substantially improves RL training, both in fully synthetic settings and in sim-to-real transfer scenarios.
On non-RL-ready tasks like WebArena, \alg outperforms all baselines by over $30\%$.
And in RL-ready but costly settings, it matches GRPO and PPO performance using only synthetic interactions. 
%
When transferring a policy trained purely on synthetic experiences to real-environment RL, \alg yields significant additional performance gains while requiring far fewer real-world interactions, providing a scalable warm-start strategy for general-purpose RL.
%
}
\date{November 7, 2025}
\begin{document}

\maketitle

\section{Introduction}
\label{sec:intro}
\begin{wrapfigure}[17]{r}{0.54\textwidth}
\vspace{-0.17in}
    \centering
    \includegraphics[width=0.54\textwidth]{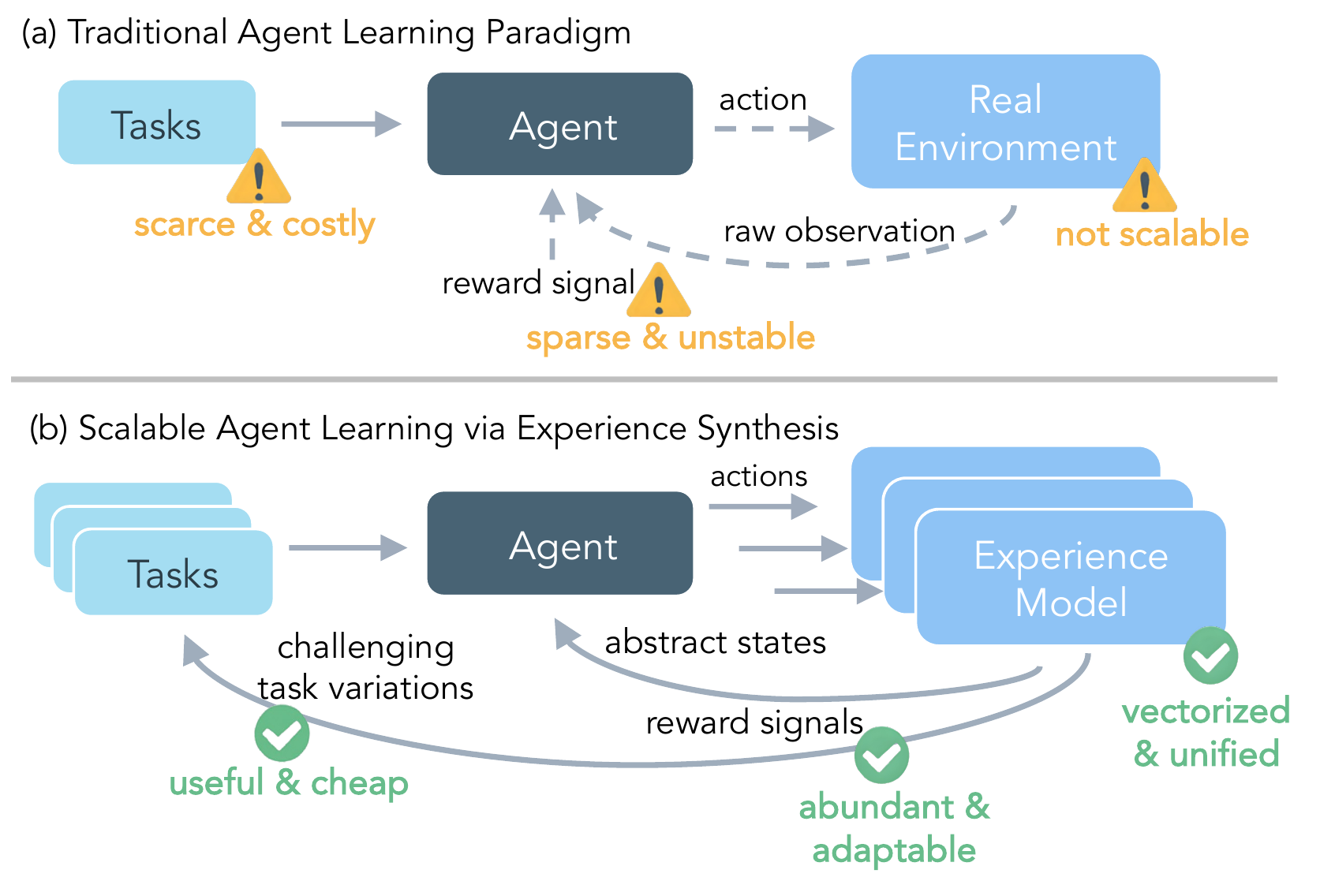}
    \vspace{-6.5mm}
    \caption{
    Compared to the traditional agent learning paradigm, \alg provides the first scalable and effective RL framework with unified infrastructure. 
    }
    \label{fig:teaser}
\end{wrapfigure}
%
Autonomous agents based on large language models (LLMs) are being widely adopted across a broad range of tasks given their comprehensive pre-trained semantic knowledge.
These agents have already shown promise in applications such as web navigation~\citep{zhouwebarena}, embodied control~\citep{shridharalfworld}, and multi-turn tool use~\citep{yao2024tau}.
However, while these agents can leverage strong language priors to reason and plan, their performance in downstream interactive settings remains limited~\citep{wang2024survey}. 
%
%
As we step into the \textit{era of experience}~\citep{silver2025welcome}, a promising direction for building more robust and adaptive language agents is reinforcement learning (RL), where agents improve by interacting with environments and bootstrapping from their own experiences~\citep{schulman2017proximal}, as illustrated in \figref{fig:teaser}~(a).
%

Despite its potential, training LLM agents via RL remains highly challenging in practice.
The most fundamental barrier is the high cost and low sample efficiency of collecting large-scale, diverse, and informative online interaction data~\citep{wei2025webagent, jiang2025verltool}.
Real environments often involve long interaction sequences, high computational cost per step, and sparse reward feedback, making it prohibitively expensive to gather sufficient amount of data for modern RL algorithms~\citep{ patilberkeley, shao2024deepseekmath}.
Beyond computational cost, there is also a lack of diverse, scalable tasks, where most existing environments provide only a limited, static set of instructions, while RL training requires a broad range of tasks for effective exploration~\citep{eysenbach2018diversity}.
However, scaling task instructions is inherently difficult, as validating their feasibility often demands costly human expertise~\citep{xue2025illusion}, leaving current environments insufficient for goal-conditioned RL.
%
%
The third major barrier is the instability of reward signals.
Many interactive settings, such as web pages and GUIs, are highly dynamic and lack consistent behaviors, resulting in noisy, sparse, or even false feedback that hinders stable learning~\citep{deng2023mind2web}.
Safety concerns further compound these challenges, as certain actions  
are irreversible (e.g., deleting an item on a real website), and most environments lack reliable reset mechanisms~\citep{zhouwebarena}.
Finally, the infrastructure difficulty of constructing RL-ready environments also remain challenging.
Existing systems are heterogeneous and often rely on heavyweight backends like Docker~\citep{jimenezswe} or virtual machines~\citep{xie2024osworld}, making large-batch rollout sampling engineering-intensive and costly.
These limitations make building general-purpose and scalable systems for training agents with RL an open and pressing challenge.
\vspace{-0.03in}

To address these challenges, we propose \textbf{\alg}, a unified and scalable RL framework that synthesizes diverse experience data in an online manner to enable efficient and effective training of LLM agents.
At the core of \alg lies a scalable \textit{reasoning-based experience model} that abstracts environment dynamics into a discrete textual space. 
By interacting with the agent over multiple turns, it produces consistent transitions and feedback that 
reflect the consequences of the agent’s actions through explicit reasoning.
%
%
Unlike prior approaches that attempt to reproduce external systems~\citep{chen2025planning, assran2025v}, the design of the experience model is grounded in a key insight that \textit{agent training does not require perfectly realistic environments, but rather interaction data that is sufficiently diverse, informative, and causally grounded to acquire knowledge} for the target task.
Therefore, powered by strong reasoning, the experience model overcomes the key limitations outlined above and delivers useful experience data for RL training.
%
%
%
%
\vspace{-0.03in}

To ensure that synthetic experiences are diverse and informative, \alg equips the experience model with an \textit{experience replay buffer}, from which it retrieves similar yet diverse trajectories to guide its current state prediction. 
This buffer is seeded with offline knowledge for essential context and is continuously enriched with trajectories generated on-the-fly, co-evolving the experience model
with the agent to ensure the produced rollouts aligned with the agent's updated policy for stable training.
In parallel, the experience model serves as a \textit{task generator}, identifying valuable tasks with high reward entropy and producing progressively more challenging variations.
This design yields an effective curriculum, where agents are consistently exposed to harder problems as their capability improves.
By unifying interaction, memory, and adaptive online task generation, \alg addresses the persistent challenges that have limited RL for LLM agents training: prohibitive cost, scarcity of diverse tasks, unstable reward signals, and heavy infrastructure demands.
%
%
It reframes training around an environment purpose-built for RL, enabling efficient synthetic training and effective sim-to-real transfer, improving generalization while minimizing reliance on costly real-world interactions.
\vspace{-0.03in}

Comprehensive experiments are conducted to evaluate \alg across diverse environments and LLM agent backbones.
For use cases lacking RL training support (e.g., WebArena~\citep{zhouwebarena}), \alg provides the only viable approach for RL-based agent training, delivering over $30\%$ improvement over all baselines and SOTA methods.
In settings where RL is supported but costly, \alg achieves performance on par with GRPO~\citep{shao2024deepseekmath} and PPO~\citep{schulman2017proximal}, while training entirely within \alg without external interactions.
Moreover, we introduce \alg-S2R (sim-to-real), which first trains agents in \alg using diverse, curriculum-driven synthetic experiences before transferring them to external environments. This approach yields over 40\% performance improvement compared to training from scratch in real-world environments while using less than 10\% of the external data, providing a scalable warm-start strategy for general-purpose RL.

\vspace{-0.02in}
\section{Related Work}\label{sec:rw}
\vspace{-0.07in}

\subsection{LLM Agents Reinforcement Learning}
\vspace{-0.05in}

RL offers a path to transform LLM agents from static generators into adaptive decision makers. 
Classical RL algorithms such as policy gradients and actor–critic methods~\citep{williams1992simple, schulman2017proximal} have achieved strong results in robotics, games, and control~\citep{Silver2016AlphaGo}. 
More recently, RL has been applied for post-training to improve their alignment~\citep{bai2022training} and general reasoning ability~\citep{liu2025spice}, as well as to enhance performance on math and coding problem-solving tasks~\citep{shao2024deepseekmath}.

However, extending RL from single-turn language modeling to interactive, multi-step agent tasks introduces substantial challenges~\citep{zhang2025agent, wang2025llava}.
Tasks such as web navigation~\citep{yao2022webshop, zhouwebarena}, operating-system control~\citep{xie2024osworld}, and multi-tool orchestration~\citep{yao2024tau, xie2024travelplanner} require long-horizon interaction yet provide sparse and delayed rewards~\citep{qi2024webrl}, making policy improvement challenging and costly, and often result in training collapse.
In addition, scaling RL for LLM agents is constrained by the lack of diverse, verifiable task suites, which typically require substantial human effort to design and annotate reward logic~\citep{zhou2025self}. These challenges are further compounded by heterogeneous environment dynamics and engineering constraints in the real world. For example, open environments such as web pages are highly dynamic and produce noisy reward signals~\citep{xue2025illusion}. Meanwhile, they often lack reliable reset mechanisms and introduce safety risks during large-scale exploration~\citep{zhouwebarena}.
Together, these challenges make real-environment RL impractical for general-purpose agents, motivating the need for a scalable solution that can supply adaptive, high-quality interaction data.
%

%

\subsection{Training Agents with Synthetic Data}

Synthetic data has long been used to address the scarcity of expert demonstrations~\citep{zhang2025agent}.
Early approaches relied on scripting oracle trajectories or generating them from stronger teacher models, training agents primarily through imitation learning~\citep{yao2022webshop, pahuja2025explorer, deng2023mind2web}.
While effective for distillation, these static trajectories require substantial human labeling and lack diversity and adaptivity.
To reduce manual effort, subsequent work~\citep{xuagenttrek, ou2024synatra} leveraged indirect sources of expertise (e.g., online tutorials) to guide trajectory synthesis, while another line of work, including AgentSynth~\citep{xie2025agentsynth} and SCA~\citep{zhou2025self}, focuses on synthesizing diverse tasks to expand the space of RL exploration. However, these methods still depend on data collection in real environments, which inherently suffer from the scalability issue and are subject to the limitations outlined earlier.

Later research shifted toward building synthetic environments, such as AlphaGo~\citep{Silver2016AlphaGo} and Dreamer~\citep{Hafner2020Dreamv1, ha2018world}, which interact with agents to generate unlimited on-policy experiences.
More recently, WebDreamer~\citep{gu2024your} and WebEvolver~\citep{fang2025webevolver} constructed world models to produce environment feedback for agent planning and training. 
Closest to our setting, UI-Simulator~\citep{wang2025llms} also leverages LLMs as a step-wise simulator, but it requires substantial expert engineering to adapt to different environments and is limited to generating only trajectory variations for supervised fine-tuning of the policy. In contrast, \alg provides a complete toolkit for general-purpose RL, supporting diverse task generation and consistent rollout synthesis with rich reward signals, offering the first scalable solution for training generic LLM agents in arbitrary environments.

\begin{figure}
    \centering
    \includegraphics[width=\linewidth]{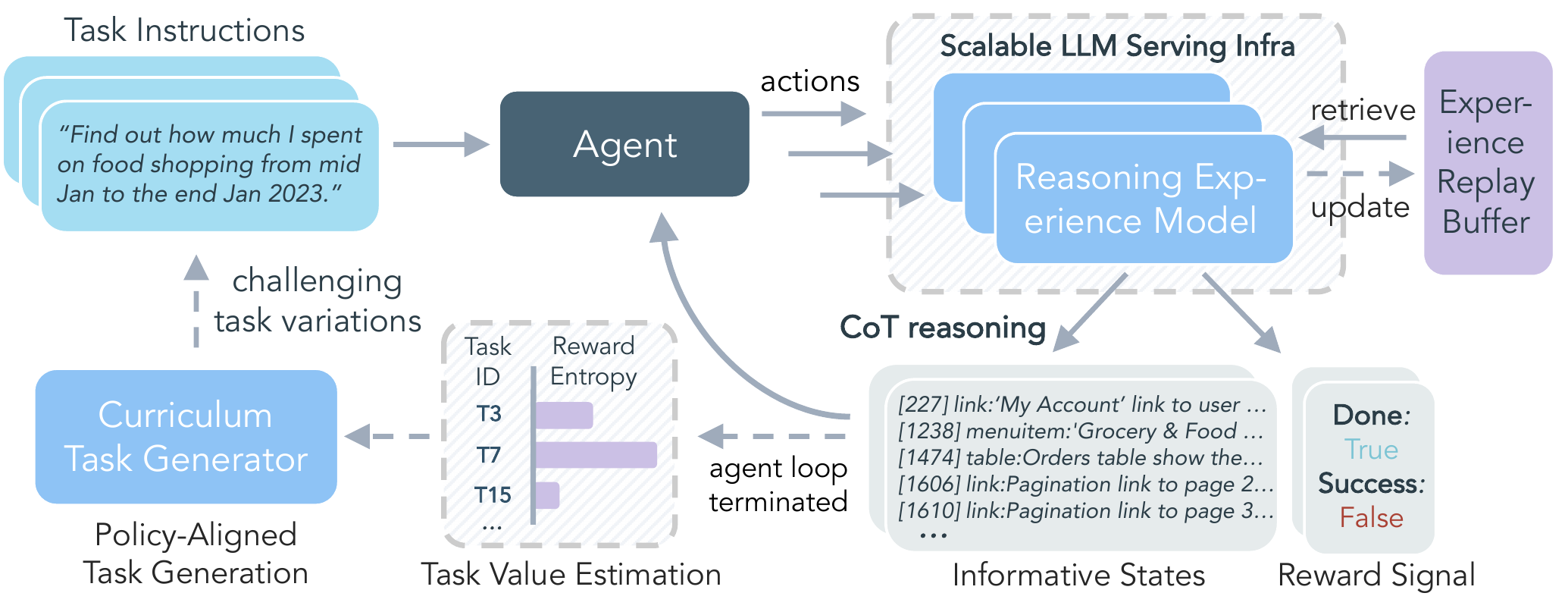}
    \vspace{-0.05in}
    \caption{
    Overview of the proposed \alg agent training framework. 
    Given a set of seed tasks, a reasoning-based experience model interacts with the agent to generate informative, diverse tasks and trajectories for RL training.
    %
    %
    At each step, the agent takes actions based on its current state and receives next states and reward signals derived by the experience model through CoT reasoning based on both interaction history and top-$k$ similar experiences from an active replay buffer.
    To expose the agent to increasingly informative scenarios, tasks with high reward entropy are proposed by the curriculum task generator for future training. 
    With this unified design, \alg addresses both task and reward sparsity while enabling scalable RL with diverse and curriculum-driven environments.
    }
    \vspace{-0.1in}
    \label{fig:pipeline}
\end{figure}

\section{Preliminaries}
\subsection{Notations} 

We formalize the agent learning problem as a Markov Decision Process (MDP)~\citep{bellman1957markovian}, defined by the tuple $\mathcal{M} = (\mathcal{S}, \mathcal{A}, T, R, \gamma, \rho_0)$, where $\mathcal{S}$ denotes the state space and $\mathcal{A}$ denotes the action space. The transition function $T\colon \mathcal{S} \times \mathcal{A} \rightarrow \Delta(\mathcal{S})$ governs the environment dynamics, where $\Delta(\mathcal{S})$ denotes the probability simplex over $\mathcal{S}$.
The reward function $R\colon \mathcal{S} \times \mathcal{A} \rightarrow \mathbb{R}$ provides feedback signals for the agent’s actions. $\gamma \in [0, 1]$ is the discount factor, and $\rho_0 \in \Delta(\mathcal{S})$ specifies the initial state distribution that includes the task instruction $\tau_0$.

In LLM agent environments, $\tau_0$ is usually a desired task specified by the user in natural language, and states $s \in \mathcal{S}$ encode the environment configuration visible to the agent, such as webpage content, tool outputs, or textual environment descriptions. Actions $a \in \mathcal{A}$ represent discrete operations, including clicking UI elements, invoking external tools, or generating textual responses.
The agent maintains a policy $\pi_\theta\colon \mathcal{S} \rightarrow \Delta(\mathcal{A})$, parameterized by $\theta$, which maps states to distributions over actions.

\subsection{Agent Learning from Experience}
\label{sec:rw_agent_learning}

Given a set of online experiences where each experience $\epsilon=\{\tau_0 \mid s_0, a_0, \ldots\}$ consists of a task $\tau_0$ and state-action rollout $\{s_0, a_0, \ldots, s_t, a_t\}$, the goal of RL is to train an agent policy $\pi_\theta$ to maximize the expected cumulative reward, which typically optimized  $\theta$ via policy gradient as follows:
\begin{equation}\label{eqn:rl_objective}
\nabla J(\theta) = \mathbb{E}_{(s_t, a_t) \sim \pi_\theta} \left[ \nabla \log \pi_\theta(a_t \mid s_t) \cdot \hat{A}(s_t, a_t) \right],
\end{equation}
where $\hat{A}(s_t, a_t)$ is the \textit{advantage function}, estimating how favorable an action is compared to others.

\paragraph{Proximal Policy Optimization (PPO).}  
PPO~\citep{schulman2017proximal} is a popular policy gradient method that improves stability by computing $\hat{A}$ with \textit{Generalized Advantage Estimation (GAE)}:
\begin{equation}\label{eqn:ppo}
\hat{A}^{\text{PPO}}_t = \sum_{l=0}^{K-1} (\gamma \lambda)^l \left[ r_{t+l} + \gamma V(s_{t+l+1}) - V(s_{t+l}) \right],
\end{equation}
where $V(\cdot)$ is a value function approximated by a LLM, and $\lambda$ controls the bias-variance tradeoff.

\paragraph{Group Relative Policy Optimization (GRPO).}  
GRPO~\citep{shao2024deepseekmath} extends PPO by discarding the value function and normalizing advantages within each group of responses $\mathcal{G}$ sampled for the same task instruction. 
Instead of GAE, the group-relative advantage is defined as:
%
\begin{equation}\label{eqn:grpo}
\hat{A}^{\text{GRPO}}_{t} = ({r_t - \text{mean}_{i \in \mathcal{G}}(r_i)})/{\text{std}_{i \in \mathcal{G}}(r_i)}
\end{equation}
where $r_t$ is the reward for output $o_t$, $\text{mean}_{i \in \mathcal{G}}(r_i)$ and $\text{std}_{i \in \mathcal{G}}(r_i)$ are mean and standard deviation of rewards from group $\mathcal{G}$.
%
GRPO discards the value function and approximates the advantage using relative normalized rewards, making policy updates more scalable but potentially less sample-efficient. 
Notably, our proposed \alg is orthogonal to specific RL algorithms and focuses on scaling the synthesis of diverse, informative experiences, thereby amplifying the effectiveness of RL training.

\section{Scaling Agent Learning via Experience Synthesis}
\label{sec:method}

To synthesize diverse agent experiences for RL training, \alg is built around three key components:
(1) a scalable \textit{reasoning experience model} that encodes the meta-dynamics of the target domain to efficiently generate informative trajectories;
(2) an \textit{experience replay buffer} that integrates offline environment knowledge with online synthetic transitions, co-evolving with the agent to stay aligned with its updated policy;
(3) a \textit{curriculum task generator} that produces progressively challenging variations of high-value tasks selected via a reward-entropy heuristic. 
We elaborate each component in the following sections.

\subsection{Building Reasoning Experience Models for Agent Learning}\label{subsec:rem}

For effective RL training, instead of relying on heterogeneous external environments that are costly to interact with and difficult to control, \alg adopts a more adaptive and controllable approach by building a LLM-based experience model that can efficiently interact with the agent over multiple turns to generate diverse experiences with consistent outcomes and rich feedback signals for learning.

Unlike prior data-hungry and costly approaches that build world models to replicate the real world in raw pixel spaces, we design an efficient reasoning experience model, denoted as $\mathcal{M}_{\text{exp}}$, that operates in an abstract, meta-representational textual space $\mathcal{S}$. 
The key insight is that synthesizing transitions in this abstract state space can reduce irrelevant dimensions and produce trajectories that are more informative and token-efficient than those derived from raw observations. 
For example, in a web shopping task, instead of processing raw HTML code, the experience model directly synthesizes clean element listings while discarding irrelevant structural artifacts such as headers and tags. 
This state-space design makes training the experience model highly sample-efficient, requiring only small pubic trajectory datasets in our experiments, while also enhancing the effectiveness of agent learning.
%

\subsubsection{Inference for Experience Rollout Collection}
\label{sec:inference}
Notably, we find that beyond the current state-action pair, three additional contexts are important for improving state quality:  
(1) \textit{interaction history} $\{(s_i, a_i)\}_{t=0}^{T}$, which incorporates the past trajectory in the context window to help maintain state consistency across multiple turns;  
(2) \textit{task instruction} $\tau$, which conditions the experience model on the current goal, enabling it to interpret actions w.r.t. task objectives and thereby predict both state transitions and rewards more accurately;  
(3) \textit{past experiences}, which are top-$k$ demonstrations $\{d_j\}_{j=1}^{k}$ retrieved from the replay buffer based on semantic similarity with the state-action pair, i.e., $\{d_j\}_{j=1}^{k} = \operatorname{Top}_k\!\left(\cos(\phi(s_t,a_t), \phi(s_i,a_i))\right)$, where $\phi(\cdot)$ denotes an arbitrary semantic encoder. 
Leveraging knowledge this way reduces hallucinations and improves factuality for knowledge-intensive state predictions.
%
Therefore, given these inputs, the experience model predicts the next state $s_{t+1}$ and reward $r_{t+1}$ via chain-of-thought (CoT)~\citep{wei2022chain}:
\begin{equation}\label{eqn:rem}
    (s_{t+1}, r_{t+1}) = \mathcal{M}_{\text{exp}}\!\Big(R_t \big| \{(s_i, a_i)\}_{i=0}^{t}, \{d_j\}_{j=1}^{k}, \tau\Big).
\end{equation}
where $R_t$ is an explicit reasoning trace produced by the experience model that guides the state transition. 
With such reasoning, it predicts the most consistent and informative transition and feedback that reflects the consequence of the agent action. 
For example, if the action is invalid, it transitions to a failure state and assigns a zero reward to signal the error, and vice versa. In our experiments, following~\citep{feng2025verl-agent}, we adopt an outcome-based reward scheme, assigning $r=1$ only at the final step when the task is successfully completed and $r=0$ in all other cases.
%
%

\subsubsection{Training Experience Models to Reason}

Benefiting from the abstract state-space design, training the experience model is highly sample-efficient and requires only limited data from the real environment. 
In practice, abundant offline trajectory datasets from public benchmarks such as the WebArena Leaderboard are sufficient for training. 
Our experience model distills such offline knowledge and then serves as a bridge to interact with the agent online for RL training.

Concretely, given a trajectory dataset $\mathcal{D} = \{(s_t, a_t, s_{t+1}, r_{t+1})\}$, each transition is annotated with an explicit reasoning trace $R^*_t$ by LLM (prompt shown in~\appref{prompt:reasoning_annotation}), which explains why the action $a_t$ taken in state $s_t$ consequently leads to the next state $s_{t+1}$ and reward $r_{t+1}$ given the available contexts.
To distill this knowledge, we train $\mathcal{M}_{\text{exp}}$ via SFT with a joint objective over reasoning generation and next-state prediction:
\begin{equation}\label{eqn:sft_loss}
\mathcal{L}_{\text{SFT}} 
= \mathbb{E}_{(s_t, a_t, s_{t+1}, R^*_t) \sim \mathcal{D}}
\Big[
    - \log P_{\theta}(R^*_t \mid s_t, a_t, \mathcal{H}_t, \mathcal{D}_k)
    - \log P_{\theta}(s_{t+1} \mid s_t, a_t, R^*_t, \mathcal{H}_t, \mathcal{D}_k)
\Big],
\end{equation}
where $\mathcal{H}_t$ denotes the interaction history, $\mathcal{D}_k$ denotes the retrieved top-$k$ demonstrations, and $\theta$ denotes the parameters of $\mathcal{M}_{\text{exp}}$.  
This objective ensures that the model (i) learns to generate faithful reasoning traces that explain the causal effect of an action, and (ii) leverages these traces to predict consistent and informative next states. 
By doing so, the experience model not only imitates expert trajectories but also acquires the ability to generalize reasoning for novel rollouts during RL training.

\subsection{Curriculum-based Task Generation}
\label{sec:task_generation}

Diverse, curriculum-aligned task instructions are important for RL agents to acquire knowledge~\citep{zhou2025self}. 
However, scaling task collections is costly, as it requires significant human effort to verify the feasibility of each task in the target environment. 
\alg inherently alleviates this burden by adapting to arbitrary new tasks within the target domain through synthetic multi-turn transitions. 
Building on this capability, we propose \textit{curriculum-based task generation}, where the same experience model actively generates new tasks as variations of a set of $m$ seed tasks:
\begin{equation}
\tau_t = \mathcal{M}_{\text{task}}(\{\tau_{t-1}^i\}_{i=1}^{m}),
\end{equation}
where $\mathcal{M}_{\text{task}}$ shares parameters with $\mathcal{M}_{\text{exp}}$.
Specifically, the seed tasks are chosen based on two criteria:  
(1) they are sufficiently challenging for the current agent policy, thereby maximizing information gain;  
(2) they are well-defined, such that unrealistic or malformed tasks can be discarded.

To satisfy both conditions, we introduce a \textit{group-based reward entropy} as a criteria for selecting high-quality and challenging tasks.
Formally, for a task $\tau$, we define its value
\begin{equation}
\mathcal{V}_\tau = \frac{1}{n}\sum_{i=1}^{n} \big(r^{i} - \bar{r}\big)^{2},
\quad \text{where } \bar{r} = \frac{1}{n}\sum_{i=1}^{n} r^{i},
\end{equation}
where $r^{i}$ are the outcome rewards from $n$ rollouts of task $\tau$ within the group $\mathcal{G}$. For GRPO, $\mathcal{G}$ can simply be the training group, while for PPO, tasks can be first clustered using a semantic embedder, and each cluster essentially forms a group $\mathcal{G}$ from which task variations can be generated.
Notably, a non-zero variance in $\mathcal{G}$ indicates that the agent observes both successes and failures on the task, signaling that the task is \textbf{feasible yet challenging}. A task reaches maximum entropy when \textbf{successes and failures are evenly balanced} in $\mathcal{G}$, providing the greatest information gain for credit assignment. This observation is consistent with recent findings that LLMs learn most effectively from tasks of intermediate difficulty~\citep{gao2025prompt}. Thus, by feeding such high-entropy tasks into $\mathcal{M}_{\text{task}}$, we generate progressively more challenging variations to enhance agent exploration and knowledge acquisition.

To stabilize training, we introduce a hyperparameter $\lambda$ that bounds the proportion of synthetic tasks sampled per iteration. This preserves sufficient coverage of the original task distribution while directing exploration toward the current policy’s weakness regions for curriculum-based improvement.

\subsection{Learning from Synthetic Experiences}

\paragraph{Policy training in synthetic environments.}  
As shown in~\figref{fig:pipeline}, \alg begins with a seed task set and generates multi-turn rollouts for each task by alternating between the agent policy, which selects actions from states, and the experience model, which predicts next states conditioned on the agent action, history, and task context (as in~\secref{sec:inference}). The collected rollouts are used with standard RL algorithms (as in~\secref{sec:rw_agent_learning}) to update the policy. After each iteration, the experience model augments the task set by generating variations of challenging tasks with high reward entropy (as in~~\secref{sec:task_generation}). This cycle of interaction, training, and curriculum expansion continues until convergence or a predefined training budget is reached. Furthermore, we provide an analytical lower bound of the policy improvement in real environments when training with purely synthetic experiences from \alg under trust-region assumptions, as detailed in~\appref{proof:policy_improvement}.

\paragraph{Sim-to-real policy transfer.} 
We further extend \alg to a sim-to-real (S2R) setting, where the agent policy is first trained with synthetic experiences and then transferred to RL in real environments. Pretraining in synthetic environments expands exploration coverage across diverse tasks and allows the agent to acquire broad knowledge at low cost, providing a strong initialization that makes subsequent real-environment learning more sample-efficient~\citep{da2025survey}. To enable seamless transfer, we ensure consistency of the state space between synthetic and real environments by applying the same rule-based mapping function or a lightweight fine-tuned model~\citep{leelearning}.


\section{Experiments}
\label{sec:experiment}

\subsection{Experimental Setup}
We evaluate \alg on a diverse suite of agentic benchmarks and LLM backbones of varying sizes and model families to assess its generalizability and effectiveness in supporting RL for generic agent tasks while reducing costly interactions.
\vspace{-0.1in}
\paragraph{Evaluation environments.}
We consider three challenging agent benchmarks that span diverse domains, complexities, and levels of RL readiness:
(1) WebShop~\citep{yao2022webshop}, which requires reasoning to refine search queries and accurately identify products to complete e-commerce tasks;
(2) ALFWorld~\citep{shridharalfworld}, which involves multi-turn tool-based embodied control to navigate 3D environments;
(3) WebArena-Lite~\citep{zhouwebarena}, which offers realistic web interaction interfaces but is not RL-ready, as it inherently lacks scalable data collection and environment reset mechanisms while incurring high computational costs.
This mixture of environments allows us to evaluate \alg both in settings where RL is feasible but computational expensive, and where RL training is not yet tractable.

%

\vspace{-0.1in}
\paragraph{Agent backbones.} 
We instantiate agents from different model families and sizes: Llama-3.2-3B-Instruct, Llama-3.1-8B-Instruct~\citep{grattafiori2024llama}, and Qwen-2.5-7B-Instruct~\citep{qwen2.5}.

\vspace{-0.1in}
\paragraph{Baselines.}
We consider two types of traditional training strategies for agents.
(1) Offline imitation learning: supervised fine-tuning (SFT), direct preference optimization (DPO)~\citep{rafailov2023direct};
(2) Online RL in real environments (traditional): GRPO~\citep{shao2024deepseekmath}, PPO~\citep{schulman2017proximal}.

\begin{table*}[t!]
\footnotesize
\centering
\caption{
Comparison of \alg with various agent training algorithms. We evaluate four groups: 
(i) \textit{offline imitation learning algorithms}: SFT, DPO; 
(ii) \textit{online RL algorithms in real-world environments}: GRPO, PPO; and 
(iii) \alg, where agents are trained via the same RL algorithms but with purely synthetic experiences; (iv) \alg-S2R, where agents are first trained with synthetic experiences and then transfer to RL in real environments.
Real data indicates the number of individual transitions (a trajectory often has $\sim$10 steps).
Best performance is bolded. 
}
\setlength{\tabcolsep}{8pt}
\resizebox{\textwidth}{!}{
\begin{tabular}{l | c | *{3}{c c c}}
\toprule
\multirow{2}{*}{\textbf{Algorithm}} &
\multirow{2}{*}{\shortstack{\textbf{Real} \\ \textbf{Data}}} &
\multicolumn{3}{c}{\textbf{WebShop}} &
\multicolumn{3}{c}{\textbf{ALFWorld}} &
\multicolumn{3}{c}{\textbf{WebArena}} \\
\cmidrule(lr){3-5}\cmidrule(lr){6-8}\cmidrule(lr){9-11}
& &
\textbf{L3.2-3B} & \textbf{L3.1-8B} & \textbf{Q2.5-7B} &
\textbf{L3.2-3B} & \textbf{L3.1-8B} & \textbf{Q2.5-7B} &
\textbf{L3.2-3B} & \textbf{L3.1-8B} & \textbf{Q2.5-7B} \\
\midrule
\multicolumn{10}{c}{\textit{Offline Imitation Learning}} \\
\cmidrule(lr){1-11}
SFT        & 20K & 32.0 & 35.1 & 32.9 & 61.7 & 68.0 & 71.8 & 6.1 & 5.5 & 7.3 \\
DPO        & 40K & 35.9 & 31.0 & 34.8 & 63.3 & 63.9 & 61.1 & 5.5 & 4.8 & 4.8 \\
\cmidrule(lr){1-11}
\multicolumn{10}{c}{\textit{GRPO}} \\
\cmidrule(lr){1-11}
Traditional & 80K & 62.1 & 65.0 & 66.1 & \textbf{65.3} & 70.9 & 79.8 & 7.3 & 6.1 & 6.1 \\
\rowcolor{gray!30} \alg     & 0   & 59.3 & 63.9 & 68.3 & 62.1 & 66.3 & 71.0 & 13.3 & 9.1 & \textbf{12.7} \\
\rowcolor{gray!30} \alg-S2R & 5K  & \textbf{70.5} & \textbf{75.0} & \textbf{72.1} & 65.0 & \textbf{75.9} & \textbf{82.4} & \textbf{13.9} & \textbf{9.7} & 11.2 \\
\cmidrule(lr){1-11}
\multicolumn{10}{c}{\textit{PPO}} \\
\cmidrule(lr){1-11}
Traditional  & 80K & 59.9 & \textbf{64.2} & 68.1 & 47.0 & 72.9 & \textbf{81.1} & 6.7 & 4.8 & 7.3 \\
\rowcolor{gray!30} \alg      & 0   & 60.5 & 58.1 & 65.0 & 40.5 & 70.8 & 72.7  & \textbf{14.5} & \textbf{10.9} & 10.0 \\
\rowcolor{gray!30} \alg-S2R  & 5K  & \textbf{66.0} & 63.9 & \textbf{73.7} & \textbf{49.1} & \textbf{73.3} & 79.9 & 13.3 & \textbf{10.9} & \textbf{13.9} \\
\bottomrule
\end{tabular}
}
\label{tab:main_results}
\vspace{-0.1in}
\end{table*}

\paragraph{Implementation details.}
All main results are reported with the experience model trained from Llama-3.1-8B-Instruct (see~\secref{subsec:rem}). To demonstrate that \alg can be applied to different RL algorithms, we first evaluate both GRPO and PPO entirely within \alg, without any real interactions.
We further evaluate a hybrid scenario, \alg-S2R, where synthetic training is followed by a small-scale RL phase that requires only a limited number of rollouts in the original environments, demonstrating the effectiveness of using \alg as a mid-training stage to improve both sample efficiency and the attainable performance after transfer. Detailed parameter settings for each scenario are provided in~\appref{app:detailed_exp_settings}.
%

\subsection{Main Results}
\paragraph{Non-RL-ready environment.}
\alg demonstrates the most significant advantage in environments where large-scale RL infrastructure is not available such as WebArena~\citep{zhouwebarena}.
Unlike existing attempts that fail to make RL effective due to environment limitations, agents trained purely in \alg achieve success rates exceeding 30\% across all backbones (\tabref{tab:main_results}), whereas zero-shot RL baselines suffer from limited exploration diversity and sparse reward signals in the original environments.
%
These results demonstrate that \alg is not merely a surrogate for costly rollouts, but a mechanism that makes RL training feasible in domains that were previously intractable due to inherent task and engineering constraints.

\paragraph{RL-ready environments.}
On WebShop~\citep{yao2022webshop} and ALFWorld~\citep{shridharalfworld}, \alg-trained agents perform on par with GRPO and PPO agents trained on 80K real interactions, despite using only synthetic rollouts.
The ability to match strong RL baselines in RL-ready environments without external interactions underscores that \alg produces transitions and rewards that are not only coherent and meaningful, but also sufficient for stable policy improvement.
Furthermore, when a small-scale RL phase with an affordable number of real rollouts (5k) is applied to the policy mid-trained in the synthetic environment, \alg-S2R consistently outperforms both GRPO and PPO baselines trained from scratch in the real environments. This validates the hypothesis that synthetic training can serve as an efficient warm-start strategy that establishes a strong foundation for more sample-efficient RL in the real environment.
%

\paragraph{Sample efficiency and training cost.}
Training efficiency is further illustrated in \figref{fig:mega_plot} \textit{Left}, where \alg achieves substantial performance gains on WebArena~\citep{zhouwebarena} while reducing training effort (including both rollouts sampling time and GPU hours) to roughly one-third or even one-fifth of RL baselines in the real environment.
The efficiency gain arises from both the dense feedback offered by curriculum-based rollout synthesis and the lightweight abstract state transitions produced by unified experience models hosted by scalable LLM services, which substantially reduce sampling cost and avoid heterogeneous environment bottlenecks, suggesting that \alg is not only a practical solution for RL training in complex and expensive environments, but also as a scalable way to generate low-cost data for stable policy improvement.
\begin{figure}[t!]
    \centering
    \begin{subfigure}[b]{0.3\textwidth}
        \centering
        \includegraphics[width=\textwidth]{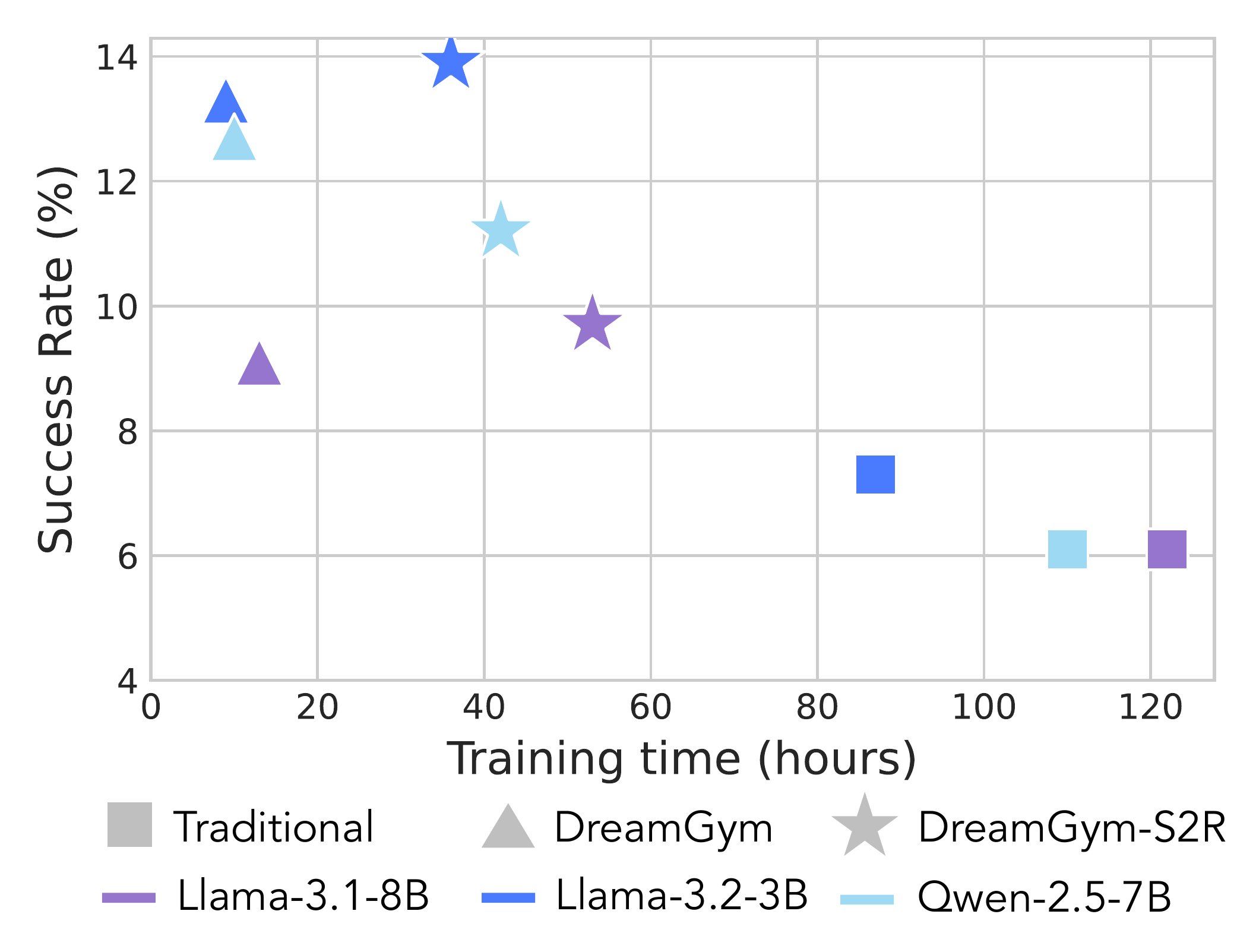}
    \end{subfigure}
    \hspace{0.05in}
    \begin{subfigure}[b]{0.345\textwidth}
        \centering
        \includegraphics[width=\textwidth]{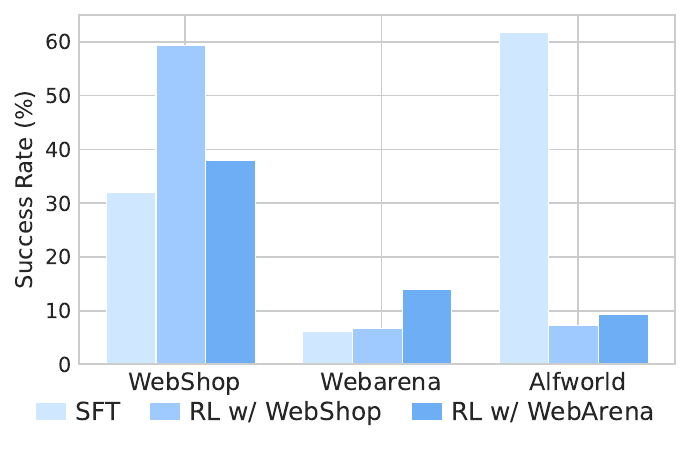}
    \end{subfigure}
    \hspace{0.05in}
    \begin{subfigure}[b]{0.312\textwidth}
        \centering
        \includegraphics[width=\textwidth]{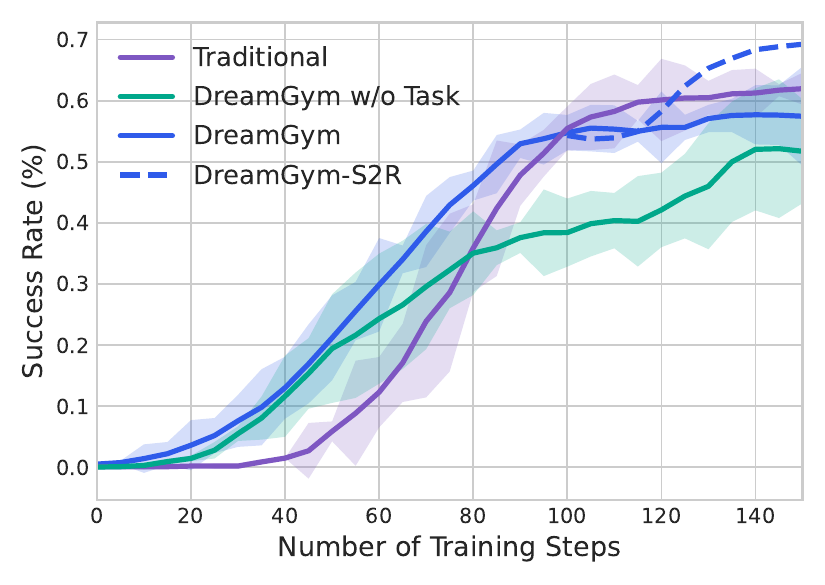}
    \end{subfigure}
    \caption{(1) \textit{Left:} Comparing the agent performance (success rate \%) on WebArena~\citep{zhouwebarena} w.r.t. total training time across different training strategies and backbones.
    (2) \textit{Middle:} Evaluating the cross-domain transferability of the agent policy trained via \alg with seed tasks from a different environment.
    (3) \textit{Right:} Comparing the agent performance on WebShop~\citep{yao2022webshop} w.r.t. number of training steps across different training strategies. 
    }
    \label{fig:mega_plot}
    \vspace{-0.2in}
\end{figure}

\paragraph{Generalization and learning transferability.}
The results in \figref{fig:mega_plot} \textit{Middle} highlight the strong generalization and cross-domain transferability of the policy trained in \alg.
Notably, (1) when trained on WebShop~\citep{yao2022webshop}, the policy generalizes to WebArena~\citep{zhouwebarena} and surpasses SFT models directly trained there, and (2) when trained on WebArena, it similarly transfers back to WebShop with superior performance than the SFT as well.
This generalization suggests that \alg learns within an abstract meta-representation space, enabling the agent to learn domain-agnostic behavioral priors rather than memorizing task-specific patterns.
However, (3) when the domain gap becomes too large, such as transferring from web-based environments (WebShop/WebArena) to ALFWorld~\citep{shridharalfworld}, the performance drops significantly, indicating the limits of current meta-representations.
%


\section{Performance Analysis and Ablation Studies}
\label{sec:ablation}

\vspace{-0.1in}
\subsection{Training Curve Analysis}
%
\figref{fig:mega_plot} \textit{Right} compares training curves across Webshop~\citep{yao2022webshop} under different setups.
Specifically, the success rate improves much more rapidly within the first 40 steps, showing that synthesized trajectories offer more informative gradients than sparse real rollouts.
%
%
%
This further highlights the role of synthetic experiences in shaping a strong initialization that real rollouts cannot.
Another observation is the reduced variance in learning dynamics.
Baseline curves exhibit larger oscillations caused by sparse or unstable rewards, while \alg curves remain smoother across runs, which suggests that synthesized trajectories provide not only denser but also more consistent feedback, mitigating the training instabilities commonly reported in WebShop~\citep{yao2022webshop} and ALFWorld~\citep{shridharalfworld}.
%


\vspace{-0.06in}
\subsection{Ablation on Task Generator}\label{subsec:ablation_task_generator}
\vspace{-0.03in}
\begin{wraptable}{r}{0.45\textwidth}
\footnotesize
\vspace{-0.45in}
\centering
\caption{
    Average success rates (\%) on the different components of \alg.
    %
}
\vspace{-0.05in}
\setlength{\tabcolsep}{8pt}
\resizebox{\linewidth}{!}{
\begin{tabular}{l | c c}
\toprule
\textbf{Method} & \textbf{WebShop} & \textbf{WebArena} \\
\midrule
\rowcolor{gray!30} \alg & 63.9 & 13.3 \\
\midrule
w/o Exp. Replay & 59.2 & 9.7 \\
w/o Exp. Reasoning & 55.8 & 7.3 \\
w/o Task Generation & 57.3 & 7.3 \\
\bottomrule
\end{tabular}
}
\label{tab:alg_ablation}
\vspace{-0.1in}
\end{wraptable}

The curriculum-based task generator plays an important role in learning progress.
As shown in \figref{fig:mega_plot} \textit{Right}, removing this component causes agents to make some initial progress but then plateau more quickly in the WebShop~\citep{yao2022webshop} scenario.
Similarly, \tabref{tab:alg_ablation} shows that removing the task generator leads to a 6.6\% and 6.0\% drop in success rate compared with the full \alg configuration in WebShop~\citep{yao2022webshop} and WebArena~\citep{zhouwebarena}, respectively.

These findings support our discussion in \secref{sec:task_generation}: without adaptive task generation, the replay buffer may saturate with low-entropy, repetitive trajectories, which limits the diversity of experiences and stalls exploration.
In contrast, the task generator continually produces progressively challenging, high-value tasks that push the agent beyond its current capability. 
This ongoing curriculum keeps the replay buffer informative and encourages exploration, ultimately yielding higher final success rates and better sample efficiency.
%

\subsection{Ablation on Experience Model}
\begin{wrapfigure}[12]{r}{0.5\textwidth}
\vspace{-0.6in}
    \centering
    \includegraphics[width=\linewidth]{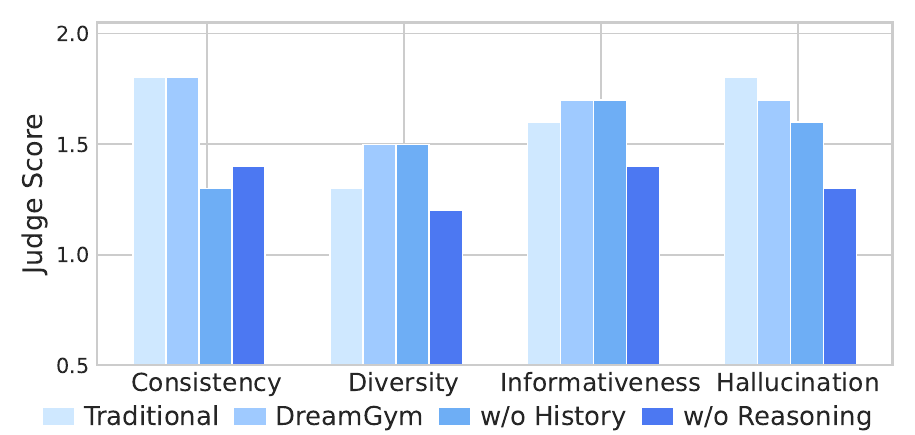}
    \caption{
    Evaluation of the experience model across key criteria using GPT-4o as the judge.
    We randomly sample 100 trajectories and prompt the model to assign discrete scores in $\{0,1,2\}$ across four criteria, as detailed in~\appref{prompt:judge}.
    }
    \vspace{-0.15in}
    \label{fig:exp_model_eval}
\end{wrapfigure}
\figref{fig:exp_model_eval} demonstrates a detailed comparison of experiences generated by four variants of the experience models: a traditional real environment model, \alg, \alg without access to past trajectory history (w/o History), and \alg without reasoning (w/o Reasoning).
We evaluate each variant along four criteria: consistency, diversity, informativeness, and hallucination, using GPT-4o~\citep{hurst2024gpt} as a judge.
As detailed in \appref{prompt:judge}, the judge assigns discrete scores in $\{0, 1, 2\}$ for each criteria, where higher values indicate better performance.
For the first three metrics, larger scores mean more consistent, diverse, and informative; for hallucination, a score of $2$ means no hallucination, while $0$ indicates more factual errors.

The results highlight the role of each component.
Removing trajectory history (w/o History) significantly reduces consistency: without awareness of prior turns, the model often drifts off-topic and breaks causal coherence in multi-step interactions.
Removing reasoning (w/o Reasoning) mainly hurts informativeness of the states and increases hallucination: without reasoning capabilities, the generated experiences tend to become shallow and less factually grounded.
Notably, removing experience reasoning also leads to a substantial drop in overall performance, as shown in \tabref{tab:alg_ablation}.
In contrast, the full \alg achieves the best or near-best performance across all metrics, confirming that history and reasoning provide complementary benefits.
More specifically, history preserves temporal and causal structure, while reasoning enhances depth and factual reliability.
This validates that the experience model must operate in a structured, reasoning-driven manner to maintain both diversity and fidelity of trajectories.

\subsection{Ablation on Experience Model Backbones and Offline Training Data}
\begin{wrapfigure}[17]{r}{0.45\textwidth}
    \centering
    \includegraphics[width=\linewidth]{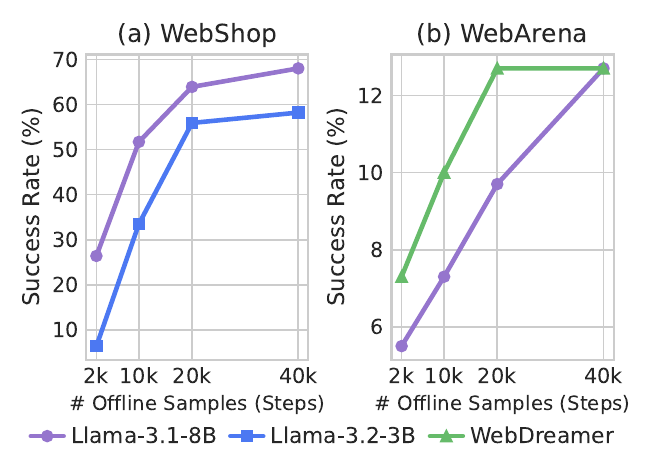}
    \caption{
    Evaluation of the experience model across different number of offline training data size (transition step) and backbone. 
    }
    \label{fig:exp_model_step}
\end{wrapfigure}
\figref{fig:exp_model_step} investigates how the success rate of the experience model varies with both the amount of offline training data and the choice of model backbone, evaluated on (a) WebShop and (b) WebArena.
We first observe that the experience model is highly data-efficient.
Even with a very limited number of offline samples (2k-10k), it already reaches competitive performance.
On WebShop, for example, the Llama-3.1-8B exceeds 50\% success rate with only 10k samples, indicating that large-scale offline datasets are not strictly necessary for effective experience synthesis.
Next, we find that smaller backbones remain viable.
Although Llama-3.2-3B underperforms the 8B model, it improves steadily as more data becomes available, reaching about 55\% success on WebShop with 20k samples, which suggests that lightweight models can still serve as practical experience generators when computational resources are constrained.
Finally, in the extreme low-data regime, pretrained world knowledge becomes particularly valuable.
On WebArena, WebDreamer~\citep{gu2024your} (a fine-tuned web world model) achieves around 13\% success, initially outperforming the Llama-3.1-8B model at smaller data scales but eventually converging as the number of offline samples increases.
This suggests that while web-specific pretraining offers an early advantage, it is not a strict prerequisite, as larger-scale experience learning can enable pre-trained models to bridge the performance gap.
%
\begin{figure}
    \centering
    \vspace{-0.2in}
    \includegraphics[width=0.96\linewidth]{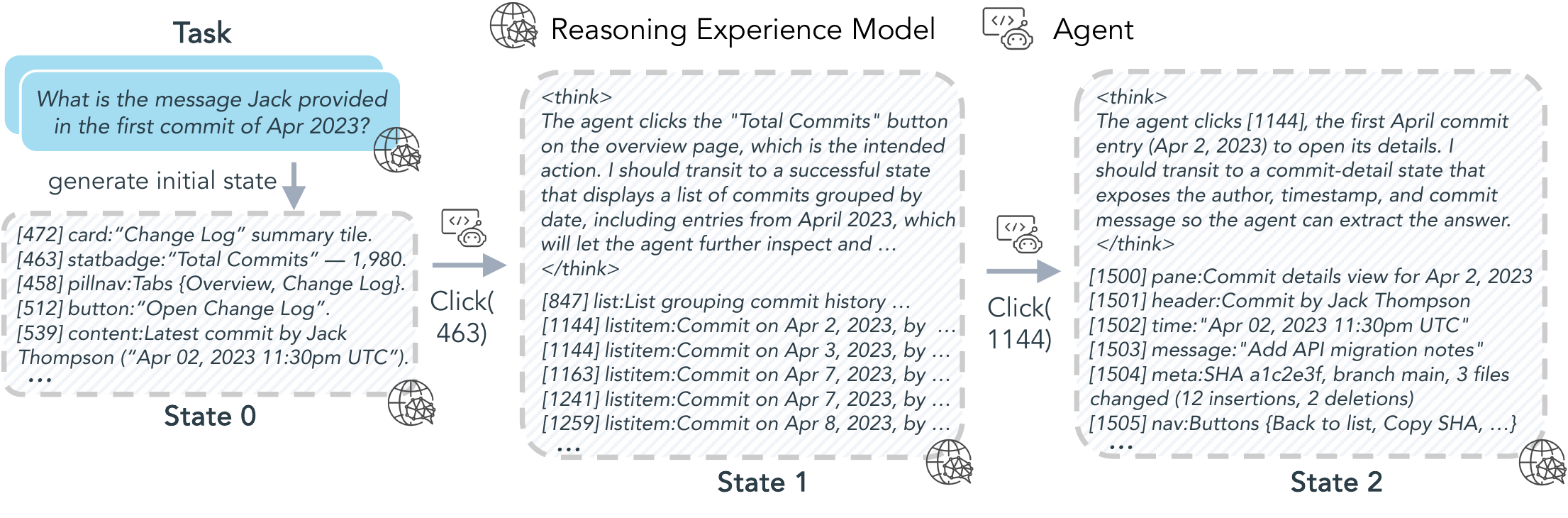}
    \caption{
    A case study of a trajectory sampled with \alg in WebArena. Starting from a synthetic instruction, the experience model reasons over the agent’s action to produce future states.
    }
    \vspace{-0.2in}
    \label{fig:case_study}
\end{figure}

\vspace{-0.05in}
\subsection{Case Study of Synthetic Experiences from \alg}
\vspace{-0.05in}
\figref{fig:case_study} illustrates how the reasoning experience model generates a synthetic task and progressively predicts states based on the agent’s actions. Specifically, it predicts each state through explicit chain-of-thought reasoning that incorporates the agent’s action, task instruction, and interaction history, producing next states that consistently ground the action and accurately reflect its consequences.

%
%
%

\vspace{-0.08in}
\section{Conclusion}
\label{sec:conclusion}
\vspace{-0.08in}

We introduced \alg, a framework that reduces the high cost of real-environment rollouts in RL for language agents by generating scalable, reasoning-driven synthetic experiences. 
\alg compresses environment dynamics into a reasoning-based experience model that produces state transitions and adaptive curricula, creating challenging yet solvable tasks tailored to the agent’s evolving policy.
Experiments across diverse environments and model backbones show consistent gains in both synthetic and sim-to-real settings, driven by the synergy of reasoning-based modeling, replay-buffer grounding, and curriculum generation. 
More broadly, our results suggest that the key bottleneck in RL for LLM agents lies in the quality and structure of interaction data. 
By treating environments as generators of structured, reasoning-rich experiences rather than mere simulators, \alg enables more scalable, sample-efficient, and generalizable RL for agents.
%


\section*{Limitations and Future Work}

Our work primarily investigates single-environment learning setups, where \alg is applied to individual agentic scenarios. However, our framework can be further extended to build a universal world model that unifies multiple environment models, enabling knowledge transfer across environments. Such an extension could pave the way toward scaling foundational agentic models with purely synthetic experiences, capable of zero-shot adaptation to new environments.

\section*{Acknowledgements}
We thank Sergey Levine, Xiaohan Fu, Canyu Chen for their valuable feedback on methodological conceptualizations, and Licheng Yu, Lizhu Zhang, Ravi Agrawal, Zhihan Liu, and Xiyao Wang for insightful discussions and project support.

\clearpage
\newpage
\bibliographystyle{assets/plainnat}
\bibliography{paper}

\clearpage
\newpage

\beginappendix

\section{Detailed Experiment Settings}\label{app:detailed_exp_settings}

In this section, we provide implementation details for each environment. 

\subsection{WebShop}

WebShop~\citep{yao2022webshop} is a large-scale agent benchmark designed to study language grounding in interactive environments. It simulates a realistic e-commerce website with 1.18M real-world products and 12,087 crowd-sourced natural language instructions, where agents must search, customize, and purchase items. The environment poses challenges such as interpreting compositional product requirements, reformulating queries, handling noisy webpage text, and strategically exploring diverse page types.  

\textbf{RL Baseline Setup.} We follow the standard setup and hyperparameter settings from Verl-Agent~\citep{feng2025verl-agent} and perform full-parameter fine-tuning for all three agent backbones in our experiments.  

\textbf{\alg Settings.} To train the reasoning experience model, we construct a dataset by combining 1,600 human demonstration trajectories from the official WebShop repository with an additional 2,000 trajectories collected using an oracle agent and random exploration. We fix the test set to ensure evaluation stability and collect training trajectories only from the remaining tasks to avoid test-set contamination. Each transition is further augmented with a reasoning trace generated by a strong teacher LLM, resulting in the dataset used for fine-tuning.

\textbf{Computation Resources.} All experiments, including both baselines and ours, are conducted on 8 nodes with A100 GPUs and 4 nodes with H100 GPUs.

\subsection{ALFWorld}

ALFWorld~\citep{shridharalfworld} is a text-and-embodied benchmark with hand-crafted task instructions designed for studying language grounding and cross-modal transfer. It pairs abstract text interactions from TextWorld with photo-realistic, physics-based execution in ALFRED/AI2-THOR, spanning six household task families (e.g., Pick \& Place, Clean \& Place, Heat/Cool \& Place) with 3,553 training tasks and seen/unseen splits across 120 rooms. Agents issue high-level textual actions (goto, open, take, clean/heat/cool, put) that must be realized as low-level visuomotor controllers, facing challenges such as partial observability, object search and manipulation, mapping language to action preconditions and affordances, and bridging the gap between abstract plans and physical feasibility.  

\textbf{RL Baseline Setup.} We adopt the standard setup and hyperparameter settings from Verl-Agent~\citep{feng2025verl-agent} and perform full-parameter fine-tuning for all three agent backbones.  

\textbf{\alg Settings.} We follow the default \texttt{ALFWorld} split~\citep{shridharalfworld} with the \texttt{TextWorld} setup~\citep{cote2018textworld} under the Verl-Agent framework. From the training split, we extract 3,200 expert demonstration trajectories paired with task instructions, and additionally sample 2,000 offline trajectories using both oracle and random policies. These datasets form the basis for training the reasoning experience model. Each transition is further augmented with a reasoning trace generated by a powerful LLM, which is used for fine-tuning.  

\textbf{Computation Resources.} All experiments, including both baselines and ours, are conducted on 8 nodes with A100 GPUs and 4 nodes with H100 GPUs.  

\subsection{WebArena}

WebArena~\citep{zhouwebarena} is a self-hosted, realistic web environment for training and evaluating autonomous agents across fully functional sites such as e-commerce, social forums, collaborative software development (GitLab), and content management, which are augmented with tools (map, calculator, scratchpad) and knowledge bases (e.g., offline Wikipedia, manuals). It provides 812 long-horizon tasks expressed as high-level natural language intents and evaluates agents by functional correctness rather than matching action traces, supporting multi-tab browsing and a rich action space (click, type, navigate, tab operations).

\textbf{Training Set Split.} Since the full evaluation set in \texttt{WebArena} is large and contains many similar tasks, we follow prior work~\citep{qi2024webrl, wei2025webagent} and evaluate agents on \texttt{WebArena-Lite}~\citep{liu2024visualagentbench}, a more balanced subset of 165 high-quality, challenging tasks selected from the original 812. The remaining 647 tasks, excluding those in the evaluation set, are used for training.

\textbf{RL Baseline Setup.} Since no reliable open-source RL infrastructure exists for WebArena~\citep{qi2024webrl}, we follow the standard workflow in Verl-Agent~\citep{feng2025verl-agent} and implement a vanilla RL pipeline for WebArena. The baseline collects action trajectories via \texttt{browsergym}~\citep{zhouwebarena} while hosting the WebArena websites on AWS servers, supporting both PPO and GRPO. Despite extensive engineering effort, we are able to operate only four AWS servers, enabling at most four parallel interaction sessions, which in turn imposes a significant bottleneck on training throughput. 

During RL sampling, we sequentially sweep through the task set and manually restart the servers and reset the environments once all tasks have been visited once, in order to avoid cross-task interference. Nevertheless, we observe that some trajectories fail to execute properly and that certain tasks are incorrectly judged by WebArena’s original evaluation function, a known issue also reported in prior work~\citep{qi2024webrl, chae2025web}. For fairness, we retain all collected trajectories and use them as-is for RL training.

\textbf{\alg Settings.} To obtain offline trajectories for training the reasoning experience model, we extract successful demonstrations from the highest-performing agents on the public WebArena leaderboard. Specifically, we use agents that incorporate accessibility tree information in their observations, including IBM CUGA~\citep{marreed2025towards}, ScribeAgent~\citep{shen2024scribeagent}, Learn-by-Interact~\citep{su2025learn}, and AgentOccam~\citep{yang2024agentoccam}. To mitigate distribution imbalance, we additionally collect trajectories generated by a high-performing agent and by a random policy. In total, we obtain 4,800 offline trajectories for training. Each transition is then augmented with a reasoning trace generated by a powerful LLM, forming the dataset for fine-tuning the reasoning experience model.  

\textbf{Computation Resources.} All experiments, including both baselines and ours, are conducted on 8 nodes with A100 GPUs and 4 nodes with H100 GPUs.

\pagebreak
\section{Theoretical Analysis}

In this section, we analyze how policies trained in the synthetic environments of \alg can provably improve performance in real environments. We show that, under mild assumptions, performance guarantees can be established by optimizing learning-centric signals of the experience model, such as reward accuracy and domain consistency, rather than strict fidelity metrics like state reconstruction error.

\subsection{Provably Policy Improvement in Real Environments Trained with Synthetic Experiences}
\label{proof:policy_improvement}

\theoremstyle{plain}
\theoremstyle{definition}
\newtheorem{assumption}{Assumption}

\alg trains LLM agents using a reasoning-based experience model $\mathcal{M}{\mathrm{exp}}$, which interacts with the agent and induces a synthetic MDP $\widehat{\mathcal M}$.
For brevity, we use $\widehat{\mathcal M}$ to denote any such synthetic environment, including $\mathcal{M}_{\mathrm{exp}}$, which is defined in the abstract textual state space, as stated in~\secref{subsec:rem}.
The learned policy is then evaluated in the real environment $\mathcal M$, projected into the same abstract space for comparison.
We show that, under standard trust-region policy update assumptions, a policy optimized in $\widehat{\mathcal M}$ is guaranteed to also achieve policy improvement in the real environment $\mathcal M$.

\begin{theorem}[Policy Improvement $J$ in Real Environment via Synthetic Experiences]\label{thm:real-improve}
Let the real MDP be $\mathcal M=(\mathcal S,\mathcal A,P,R,\gamma)$, the synthetic MDP
induced by $\mathcal{M}_{\mathrm{exp}}$ be $\widehat{\mathcal M}=(\mathcal S,\mathcal A,\widehat P,\widehat R,\gamma)$, discount be $\gamma\in(0,1)$, and let rewards be bounded $R,\widehat R\in[0,R_{\max}]$ with $V_{\max}\coloneqq R_{\max}/(1-\gamma)$.
Assume one-step experience-model errors
\begin{equation}
\varepsilon_R \;\coloneqq\; \sup_{s,a}\big|R(s,a)-\widehat R(s,a)\big|,\qquad
\varepsilon_P \;\coloneqq\; \sup_{s,a}\TV\!\big(P(\cdot|s,a),\widehat P(\cdot|s,a)\big),
\end{equation}
and a trust-region update $\pi\!\to\!\pi'$ obtained \emph{by optimizing in $\widehat{\mathcal M}$}
with per-state KL radius
$\sup_s \KL(\pi'(\cdot|s)\,\|\,\pi(\cdot|s)) \le \delta$, as enforced by the soft KL penalty in PPO and GRPO.
Hence
\begin{equation}\resizebox{0.95\linewidth}{!}{$
J_{\mathcal M}(\pi') - J_{\mathcal M}(\pi)
\;\ge\;
\underbrace{\frac{1}{1-\gamma}\,
\mathbb{E}_{s\sim d^{\pi}_{\widehat{\mathcal M}},\,a\sim \pi'(\cdot|s)}
\!\big[A^{\pi}_{\widehat{\mathcal M}}(s,a)\big]}_{\text{synthetic surrogate gain in }\mathcal{M}_{\mathrm{exp}}}
-
\underbrace{\frac{4\gamma}{(1-\gamma)^2}\,V_{\max}\,\delta}_{\text{trust-region penalty}}
-
\underbrace{2\!\left(\frac{\varepsilon_R}{1-\gamma}+\frac{2\gamma R_{\max}}{(1-\gamma)^2}\varepsilon_P\right)}_{\text{experience model error}}$
}
\end{equation}
%
In particular, if the synthetic surrogate gain exceeds the two penalties, then $J_{\mathcal M}(\pi')\!\ge\!J_{\mathcal M}(\pi)$.

\end{theorem}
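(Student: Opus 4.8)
The plan is to combine two classical tools: the performance-difference lemma (Kakade--Langford) applied \emph{within the synthetic MDP} $\widehat{\mathcal M}$, and a simulation-lemma-style bound that controls the gap between $J_{\mathcal M}$ and $J_{\widehat{\mathcal M}}$ for a \emph{fixed} policy in terms of $\varepsilon_R$ and $\varepsilon_P$. Concretely, I would write
\[
J_{\mathcal M}(\pi') - J_{\mathcal M}(\pi)
= \underbrace{\big(J_{\mathcal M}(\pi') - J_{\widehat{\mathcal M}}(\pi')\big)}_{\text{(I)}}
+ \underbrace{\big(J_{\widehat{\mathcal M}}(\pi') - J_{\widehat{\mathcal M}}(\pi)\big)}_{\text{(II)}}
+ \underbrace{\big(J_{\widehat{\mathcal M}}(\pi) - J_{\mathcal M}(\pi)\big)}_{\text{(III)}},
\]
and then lower-bound each piece. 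Terms (I) and (III) are each bounded in absolute value by the simulation lemma: for any fixed policy, $|J_{\mathcal M}(\pi)-J_{\widehat{\mathcal M}}(\pi)| \le \frac{\varepsilon_R}{1-\gamma} + \frac{2\gamma R_{\max}}{(1-\gamma)^2}\varepsilon_P$, which accounts for the factor $2$ and the structure of the ``experience model error'' term in the statement. The heart of the argument is term (II).

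For term (II) I would invoke the standard trust-region / CPI lower bound, but stated relative to $\widehat{\mathcal M}$ since that is where the optimization actually happens. The performance-difference lemma gives
\[
J_{\widehat{\mathcal M}}(\pi') - J_{\widehat{\mathcal M}}(\pi)
= \frac{1}{1-\gamma}\,\mathbb{E}_{s\sim d^{\pi'}_{\widehat{\mathcal M}},\,a\sim\pi'(\cdot|s)}\!\big[A^{\pi}_{\widehat{\mathcal M}}(s,a)\big].
\]
Replacing the visitation distribution $d^{\pi'}_{\widehat{\mathcal M}}$ by the more accessible $d^{\pi}_{\widehat{\mathcal M}}$ introduces an error controlled by $\|d^{\pi'}_{\widehat{\mathcal M}} - d^{\pi}_{\widehat{\mathcal M}}\|_1$, which in turn is bounded via the KL radius $\delta$: by Pinsker's inequality each per-state policy shift is at most $\sqrt{\delta/2}$ in total variation, and propagating this through the discounted occupancy measure yields a $\frac{\gamma}{(1-\gamma)}$ amplification, while the advantage itself is bounded by $2V_{\max}$. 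Bookkeeping the constants produces the $\frac{4\gamma}{(1-\gamma)^2}V_{\max}\,\delta$ penalty. (Depending on whether one prefers the TV or KL form of the radius, one may want to first pass from the stated KL bound to a TV bound via Pinsker and absorb the square root, or keep $\delta$ as a KL quantity and carry $\sqrt{\delta}$; I would match whichever normalization makes the final constant come out as written, noting that for small $\delta$ the two are interchangeable up to constants.)

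The main obstacle I anticipate is the visitation-distribution swap in term (II): getting from $d^{\pi'}_{\widehat{\mathcal M}}$ to $d^{\pi}_{\widehat{\mathcal M}}$ cleanly, with the right power of $(1-\gamma)$, requires the telescoping identity $d^{\pi'} - d^{\pi} = \gamma\,(\text{something})$ unrolled over the horizon, and care is needed so that the per-state KL bound (a \emph{local} quantity) correctly aggregates into a \emph{global} bound on the occupancy-measure distance. A secondary subtlety is that $A^{\pi}_{\widehat{\mathcal M}}$ and the surrogate objective are evaluated in the \emph{synthetic} model, so one must be explicit that the quantity the RL algorithm actually maximizes (the clipped/KL-penalized PPO or GRPO objective) is exactly this synthetic surrogate, making the ``synthetic surrogate gain'' term directly observable during training rather than a theoretical abstraction. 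Once these are handled, the three bounds add up termwise to give the claimed inequality, and the final ``in particular'' clause is immediate: if the synthetic surrogate gain exceeds the sum of the trust-region penalty and the experience-model error, the right-hand side is nonnegative, hence $J_{\mathcal M}(\pi') \ge J_{\mathcal M}(\pi)$.
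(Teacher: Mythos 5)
Your proposal is correct and follows essentially the same route as the paper: the identical three-term telescoping decomposition through $J_{\widehat{\mathcal M}}$, the same simulation-lemma bound $\frac{\varepsilon_R}{1-\gamma}+\frac{2\gamma R_{\max}}{(1-\gamma)^2}\varepsilon_P$ applied to each of the two model-gap terms (giving the factor $2$), and the standard TRPO trust-region bound for the improvement inside $\widehat{\mathcal M}$ (which the paper simply cites from Schulman et al.\ 2015, whereas you sketch its derivation via the performance-difference lemma, the occupancy-measure swap, and Pinsker). Your worry about $\sqrt{\delta}$ versus $\delta$ resolves because the cited bound is quadratic in the total-variation radius, so Pinsker makes it linear in the KL radius $\delta$, matching the stated constant $\frac{4\gamma}{(1-\gamma)^2}V_{\max}\,\delta$ with the advantage bounded by $2V_{\max}$.
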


Specifically, (1) the \textit{synthetic surrogate gain} denotes the agent’s performance improvement when trained and evaluated within the synthetic environment provided by the experience model $\mathcal{M}_{\mathrm{exp}}$. 
(2) The \textit{trust-region penalty} corresponds to the KL radius $\delta$ constraint, which is softly enforced by PPO or GRPO. 
(3) The \textit{experience-model error} measures how well $\mathcal{M}_{\mathrm{exp}}$ preserves \emph{learning-relevant signals} of the original environment for agent knowledge acquisition including two key components: 
(a) the \textbf{faithfulness of feedback} ($\varepsilon_R$), i.e., how accurately reward signals reflect real outcomes, and 
(b) the \textbf{domain consistency of state transitions} ($\varepsilon_P$), i.e., how well state space distributions align with the dynamics from the original environment. 

Notably, these two error terms align with our design insights in~\secref{subsec:rem}: \textbf{the synthetic environment need only provide domain-consistent transitions and correct, retrospective learning signals, without having to clone the original environment at the raw state level}. In practice, both $\varepsilon_R$ and $\varepsilon_P$ can be made very small even when $\mathcal{M}_{\text{exp}}$ is trained with minimal trajectory data annotated with explicit reasoning traces.

\begin{proof}[Proof of \thmref{thm:real-improve}]
We first decompose the policy improvement in the real environment through the synthetic environment:
\begin{equation}
J_{\mathcal M}(\pi')-J_{\mathcal M}(\pi)
=\big(J_{\widehat{\mathcal M}}(\pi')-J_{\widehat{\mathcal M}}(\pi)\big)
+\big(J_{\mathcal M}(\pi')-J_{\widehat{\mathcal M}}(\pi')\big)
-\big(J_{\mathcal M}(\pi)-J_{\widehat{\mathcal M}}(\pi)\big).
\end{equation}
By \lemref{lem:sim}, each of the two policy discrepancy terms
$\big|J_{\mathcal M}(\cdot)-J_{\widehat{\mathcal M}}(\cdot)\big|$ is at most
$\Delta_{\mathrm{model}}$, hence
\begin{equation}
J_{\mathcal M}(\pi')-J_{\mathcal M}(\pi)
\;\ge\;
\big(J_{\widehat{\mathcal M}}(\pi')-J_{\widehat{\mathcal M}}(\pi)\big)-2\Delta_{\mathrm{model}}.
\end{equation}
It remains to lower bound improvement inside the synthetic environment. Using the
standard trust-region bound~\citep{schulman2015trust},
which is enforced in practice by PPO and GRPO via a per-state KL radius $\delta$, we have
\begin{equation}
J_{\widehat{\mathcal M}}(\pi')-J_{\widehat{\mathcal M}}(\pi)
\;\ge\;
\frac{1}{1-\gamma}\,
\mathbb E_{s\sim d^{\pi}_{\widehat{\mathcal M}},\,a\sim \pi'(\cdot|s)}
\!\big[A^{\pi}_{\widehat{\mathcal M}}(s,a)\big]
-\frac{4\gamma}{(1-\gamma)^2}\,V_{\max}\,\delta.
\end{equation}
Combining these two terms yields the inequality in \thmref{thm:real-improve}, which completes the proof.
\end{proof}

\begin{lemma}[Multi-turn experience synthesis error bound]\label{lem:sim}
For any policy $\pi$, if
\begin{equation}
\varepsilon_R=\sup_{s,a}|R(s,a)-\widehat R(s,a)|,\qquad
\varepsilon_P=\sup_{s,a}\TV\big(P(\cdot|s,a),\widehat P(\cdot|s,a)\big),
\end{equation}
then
\begin{equation}
\big|J_{\mathcal M}(\pi)-J_{\widehat{\mathcal M}}(\pi)\big|
\;\le\;
\Delta_{\mathrm{model}}
\;\coloneqq\;
\frac{\varepsilon_R}{1-\gamma}+\frac{2\gamma R_{\max}}{(1-\gamma)^2}\,\varepsilon_P.
\end{equation}
\end{lemma}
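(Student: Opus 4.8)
The plan is to prove \lemref{lem:sim} via the standard simulation-lemma argument, bounding the value gap by telescoping the one-step reward and transition errors over the infinite horizon. The key object is the difference in expected returns $J_{\mathcal M}(\pi) - J_{\widehat{\mathcal M}}(\pi)$ under a fixed policy $\pi$, which I would expand by writing each return as a sum of discounted rewards along trajectories and then comparing term by term. First I would fix a policy $\pi$ and consider the value functions $V^\pi_{\mathcal M}$ and $V^\pi_{\widehat{\mathcal M}}$, so that $J_{\mathcal M}(\pi) - J_{\widehat{\mathcal M}}(\pi) = \mathbb{E}_{s_0\sim\rho_0}[V^\pi_{\mathcal M}(s_0) - V^\pi_{\widehat{\mathcal M}}(s_0)]$.

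Next I would apply a Bellman-type decomposition: for any state $s$,
\begin{equation}
V^\pi_{\mathcal M}(s) - V^\pi_{\widehat{\mathcal M}}(s)
= \mathbb{E}_{a\sim\pi(\cdot|s)}\Big[ R(s,a) - \widehat R(s,a) + \gamma\big(\mathbb{E}_{s'\sim P}[V^\pi_{\mathcal M}(s')] - \mathbb{E}_{s'\sim \widehat P}[V^\pi_{\widehat{\mathcal M}}(s')]\big)\Big].
\end{equation}
I would then split the transition term by adding and subtracting $\gamma\,\mathbb{E}_{s'\sim \widehat P}[V^\pi_{\mathcal M}(s')]$, which separates it into a recursive piece $\gamma\,\mathbb{E}_{s'\sim\widehat P}[V^\pi_{\mathcal M}(s') - V^\pi_{\widehat{\mathcal M}}(s')]$ and a model-mismatch piece $\gamma\big(\mathbb{E}_{s'\sim P}[V^\pi_{\mathcal M}(s')] - \mathbb{E}_{s'\sim\widehat P}[V^\pi_{\mathcal M}(s')]\big)$. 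The reward piece is bounded in absolute value by $\varepsilon_R$. The model-mismatch piece is bounded using the TV characterization $|\mathbb{E}_{P}[f] - \mathbb{E}_{\widehat P}[f]| \le 2\,\|f\|_\infty\,\TV(P,\widehat P)$ together with $\|V^\pi_{\mathcal M}\|_\infty \le V_{\max} = R_{\max}/(1-\gamma)$, giving a bound of $2\gamma R_{\max}\varepsilon_P/(1-\gamma)$. Taking suprema over states and solving the resulting recursion $\|\Delta\|_\infty \le \varepsilon_R + 2\gamma R_{\max}\varepsilon_P/(1-\gamma) + \gamma\|\Delta\|_\infty$ yields $\|\Delta\|_\infty \le \frac{1}{1-\gamma}\big(\varepsilon_R + \frac{2\gamma R_{\max}}{1-\gamma}\varepsilon_P\big) = \frac{\varepsilon_R}{1-\gamma} + \frac{2\gamma R_{\max}}{(1-\gamma)^2}\varepsilon_P = \Delta_{\mathrm{model}}$, and since $|J_{\mathcal M}(\pi) - J_{\widehat{\mathcal M}}(\pi)| \le \|\Delta\|_\infty$ the claim follows.

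The main obstacle, such as it is, lies in handling the coupling of \emph{two simultaneous sources of error} (reward and transition) together with the fact that the two value functions are defined with respect to \emph{different} dynamics, so a naive term-by-term comparison does not close the recursion cleanly. The add-and-subtract trick of inserting $\gamma\,\mathbb{E}_{s'\sim\widehat P}[V^\pi_{\mathcal M}(s')]$ is what resolves this: it ensures the recursive term is taken under a single, consistent transition kernel $\widehat P$ so that the contraction factor $\gamma$ can be pulled out. One should also take mild care that the reasoning-based experience model induces a well-defined MDP on the abstract state space $\mathcal S$ so that $V^\pi_{\widehat{\mathcal M}}$ exists and is bounded — this is exactly the projection-into-abstract-space assumption invoked in the theorem statement, under which all the above quantities are finite and the geometric series converge for $\gamma\in(0,1)$.
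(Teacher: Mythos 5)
Your proposal is correct and follows essentially the same route as the paper's proof: the add-and-subtract of $\gamma\,\mathbb{E}_{s'\sim\widehat P}[V^\pi_{\mathcal M}(s')]$ is exactly the paper's splitting of $T_\pi V^\pi_{\mathcal M}-\widehat T_\pi V^\pi_{\widehat{\mathcal M}}$ into an operator-mismatch term $(T_\pi-\widehat T_\pi)V^\pi_{\mathcal M}$ and a contraction term $\widehat T_\pi V^\pi_{\mathcal M}-\widehat T_\pi V^\pi_{\widehat{\mathcal M}}$, with the same TV bound $|\mathbb{E}_P[f]-\mathbb{E}_{\widehat P}[f]|\le 2\|f\|_\infty\,\TV(P,\widehat P)$, the same use of $\|V^\pi_{\mathcal M}\|_\infty\le V_{\max}$, and the same recursion solved by the factor $(1-\gamma)$. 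The only difference is presentational (pointwise expectations versus Bellman-operator notation), so nothing further is needed.
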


\begin{proof}
We first compare the Bellman operators of the real and synthetic environments.
For any bounded value function $V$,
\begin{equation}
(T_\pi V)(s)=\mathbb E_{a\sim\pi(\cdot|s)}\!\big[R(s,a)+\gamma\,\mathbb E_{s'\sim P(\cdot|s,a)}V(s')\big],
\end{equation}
and let $\widehat T_\pi$ be the same expression with $(R,P)$ replaced by $(\widehat R,\widehat P)$.
Thus for any bounded value function $V$, the operator difference is bounded as
\begin{align}
\|T_\pi V-\widehat T_\pi V\|_\infty
&\le \sup_{s,a}|R(s,a)-\widehat R(s,a)|
+ \gamma \sup_{s,a}\Big|\mathbb E_{s'\sim P(\cdot|s,a)}V(s') - \mathbb E_{s'\sim \widehat P(\cdot|s,a)}V(s')\Big| \\
&\le \varepsilon_R + 2\gamma \|V\|_\infty \varepsilon_P,
\end{align}
which is derived by simply using the definitions of $\varepsilon_R,\varepsilon_P$ and the variational characterization of TV.

Now apply this bound to $V=V^\pi_{\mathcal M}$ and add–subtract:
\begin{align}
\|V^\pi_{\mathcal M}-V^\pi_{\widehat{\mathcal M}}\|_\infty
&= \|T_\pi V^\pi_{\mathcal M}-\widehat T_\pi V^\pi_{\widehat{\mathcal M}}\|_\infty \\
&\le \|T_\pi V^\pi_{\mathcal M}-\widehat T_\pi V^\pi_{\mathcal M}\|_\infty
   + \|\widehat T_\pi V^\pi_{\mathcal M}-\widehat T_\pi V^\pi_{\widehat{\mathcal M}}\|_\infty \\
&\le \varepsilon_R+2\gamma V_{\max}\varepsilon_P
   + \gamma\|V^\pi_{\mathcal M}-V^\pi_{\widehat{\mathcal M}}\|_\infty.
\end{align}
By rearranging the contraction term into the left side, we have
\begin{equation}
(1-\gamma)\|V^\pi_{\mathcal M}-V^\pi_{\widehat{\mathcal M}}\|_\infty
\le \varepsilon_R+2\gamma V_{\max}\varepsilon_P.
\end{equation}
Hence
\begin{equation}
\|V^\pi_{\mathcal M}-V^\pi_{\widehat{\mathcal M}}\|_\infty
\le \tfrac{1}{1-\gamma}\big(\varepsilon_R+2\gamma V_{\max}\varepsilon_P\big).
\end{equation}

Finally, since $J_{\mathcal E}(\pi)=\mathbb E_{s_0\sim\mu}[V^\pi_{\mathcal E}(s_0)]$, we obtain
\begin{align}
\big|J_{\mathcal M}(\pi)-J_{\widehat{\mathcal M}}(\pi)\big|
&= \Big|\mathbb E_{s_0\sim\mu}\!\big[V^\pi_{\mathcal M}(s_0)-V^\pi_{\widehat{\mathcal M}}(s_0)\big]\Big| \\
&\le \|V^\pi_{\mathcal M}-V^\pi_{\widehat{\mathcal M}}\|_\infty \\
&\le \frac{\varepsilon_R}{1-\gamma}+\frac{2\gamma R_{\max}}{(1-\gamma)^2}\varepsilon_P \\
&=: \Delta_{\mathrm{model}}.
\end{align}
This indicates that the gap of agent performance between real and synthetic environments depends only on reward accuracy and domain consistency errors, rather than on strict fidelity metrics such as state reconstruction error, etc.
\end{proof}



\pagebreak
\section{\alg Prompts}
\subsection{WebShop}
\begin{promptbox}{Experience model reasoning step annotation
 \,|\, \texttt{WebShop}}\label{prompt:reasoning_annotation}
\renewcommand{\arraystretch}{1.4}
\begin{tabularx}{\linewidth}{>{\raggedright\arraybackslash}p{2.4cm} X}
\EXrow{ExpertBG}{System Prompt:}{
You are an expert in web navigation and e-commerce environments, specializing in providing actionable guidance for state transitions of an experience model that simulates the environment dynamics.
}
\EXrow{StateBG}{User Prompt:}{
You are synthesizing environment state transition plans for training world models in webshopping tasks.
You are provided with a task \texttt{instruction}, a \texttt{flag} indicating whether the trajectory is successful, and a \texttt{trajectory} $\{(s_i,a_i)\}_{i=1}^N$ of the environment state and the corresponding agent action at each step.\newline
\textbf{Task Context}:\newline
Task: \{\texttt{instruction}\}
Success: \{\texttt{flag}\}\newline
\textbf{Trajectory Steps:}\newline
{\texttt{" ".join(["Step: \{$i$\}, Environment State: \{$s_i$\}, Action: \{$a_i$\}"]$_{i=1}^N$)}}\newline
\textbf{Your Task}:
\begin{flushleft}
     $\bullet$ \textbf{Task Tutorial}: A high-level guidance of how the environment should transit step-by-step to interact with the agent under the given task instruction. It should highlight the critical steps that the agent should perform in order for the environment to transit to the final successful state.\\
     $\bullet$ \textbf{State Transition Plans}: For each step, first analyze whether the agent's action is likely to success or fail based on the task tutorial (e.g. the search query is too vague or too specific, or the agent clicks the wrong product), and then provide a concise reasoning trace describing how the environment should transition given the current state and action.
\end{flushleft}
\textbf{CRITICAL}: You MUST generate exactly one transition plan for each environment step provided and your \texttt{state\_transitions} array must contain exactly  \texttt{len(env\_step\_ids)} entries, one for each  \texttt{step\_id}.\newline
For product listing pages, the state transition plan should mention some actionable details such as the number of products shown on this page, whether this page should contain the target product given the agent's action.
Focus on actionable guidance for training the experience model. Keep responses concise and practical.\newline

\textbf{Response Format}:
json
\{
"task\_tutorial":\newline
 \{"Overall Plan": "A one-sentence high-level guidance of how the environment should transit step-by-step to interact with the agent under the given task instruction.",\newline
   "Success Mode": "Describe the critical steps that the agent should perform to succeed in the task, where the environment should correspondingly transit to the successful state. Summarize in one sentence.",\newline
   "Failure Mode": "Describe the typical failure mode the agent should avoid, where the environment should correspondingly transit to the failed state once the agent performs the action. Summarize in one sentence."
\},\newline"state\_transitions": [\{ \newline "step\_id": 0,\newline "transition\_plan": "Analyze whether the agent's action is good or bad based on the next state and overall task tutorial, and a corresponding plan for how environment should respond to this action."\}\newline...\newline
]
\}
}
\end{tabularx}
\end{promptbox}

\begin{promptbox}{Task variation dataset construction
 \,|\, \texttt{WebShop}}
\renewcommand{\arraystretch}{1.4}
\begin{tabularx}{\linewidth}{>{\raggedright\arraybackslash}p{2.5cm} X}
  \EXrow{ExpertBG}{System Prompt:}{
    You are an expert in e-commerce task design and AI training data curation.
    }
  \EXrow{StateBG}{User prompt:}{
    You are an expert in e-commerce task design. I will give you an original web shopping \texttt{task instruction} and several \texttt{candidate variations} of this task. Your job is to select the most challenging yet feasible variation that would be good to train an AI agent to acquire the skills of shopping for the given \texttt{product}.

    \textbf{Original Task}: \{\texttt{task instruction}\} \newline
    \textbf{Product Information}:
    \begin{flushleft}
        \hspace{15pt} 1. Category: \{\texttt{product\_info['category']}\}\\
        \hspace{15pt} 2. Product Name: \{\texttt{product\_info['name']}\} \\
       \hspace{15pt} 3.  Available Attributes: \{\texttt{','.join(product\_info['attributes'])}
    \end{flushleft}
    \textbf{Candidate Variations}:\{\texttt{candidates variations}\} \newline
     \textbf{Criteria for selection}: 
    \begin{flushleft}
        \hspace{15pt}$\bullet$ \textbf{Challenging but Feasible}: The task should be more specific or complex than the original, but still achievable, so as to strengthen the agent's capabilities for shopping for the given product.\\
        \hspace{15pt}$\bullet$ \textbf{High Quality}: The instruction should be clear, grammatically correct, and realistic. \\
        \hspace{15pt}$\bullet$ \textbf{Meaningful Variation}: The changes should make the task meaningfully different (not just trivial changes). \\
       \hspace{15pt}$\bullet$ \textbf{Realistic}: The combination of attributes, options, and price should make sense for the product category.
    \end{flushleft}

\textbf{Please respond with}:
    \begin{flushleft}
    \hspace{15pt} 1. The number of your selected variation 1- (\texttt{len(\{candidate variations\})}).\\
    \hspace{15pt} 2. A brief explanation (1-2 sentences) of why this variation is the most challenging and high-quality.
    \end{flushleft}

\textbf{Format your response as}:\newline
SELECTION: [number]\newline
REASONING: [explanation]
}
\end{tabularx}
\end{promptbox}

\begin{promptbox}{Agent Prompt Template
 \,|\, \texttt{WebShop}}
\renewcommand{\arraystretch}{1.4}
\begin{tabularx}{\linewidth}{>{\raggedright\arraybackslash}p{0cm} X}
  \EXrow{StateBG}{}{
  You are an expert autonomous agent operating in the WebShop e‑commerce environment.\newline
Your task is to:\newline\{\texttt{task\_description}\}.\newline

Prior to this step, you have already taken \{\texttt{step\_count}\} step(s). Below are the most recent \{\texttt{history\_length}\} observations and the corresponding actions you took: \newline \{\texttt{action\_history}\} \newline
You are now at step \{\texttt{current\_step}\} and your current observation is:\newline \{\texttt{current\_observation}\}.\newline
Your admissible actions of the current situation are:\newline
[
\{\texttt{available\_actions}\}
].\newline

Now it's your turn to take one action for the current step.
You should first reason step-by-step about the current situation, then think carefully which admissible action best advances the shopping goal. This reasoning process MUST be enclosed within \texttt{<think> </think>} tags.\newline
Once you've finished your reasoning, you should choose an admissible action for current step and present it within \texttt{<action> </action>} tags.
   }
\end{tabularx}
\end{promptbox}

\clearpage
\subsection{ALFWorld}

\begin{promptbox}{Task variation dataset construction
 \,|\, \texttt{ALFWorld}}
\renewcommand{\arraystretch}{1.4}
\begin{tabularx}{\linewidth}{>{\raggedright\arraybackslash}p{2.5cm} X}
  \EXrow{ExpertBG}{System Prompt:}{
    You are an expert in embodied task design and AI training data curation for interactive embodied environments.
    }
  \EXrow{StateBG}{User prompt:}{
    You are an expert in embodied task design. I will give you a feasible \texttt{task instruction} for an embodied agent and several \texttt{candidate variations} of this task. Your job is to select the most challenging yet feasible variation that would be good to train an AI agent to acquire generalizable embodied reasoning skills.

    \textbf{Original Task}: \{\texttt{task instruction}\} \newline
    \textbf{Environment Context}:
    \begin{flushleft}
        \hspace{15pt} 1. Room Type: \{\texttt{env\_info['room']}\}\\
        \hspace{15pt} 2. Objects Present: \{\texttt{','.join(env\_info['objects'])}\}\\
        \hspace{15pt} 3. Containers/Surfaces: \{\texttt{','.join(env\_info['locations'])}\}
    \end{flushleft}
    \textbf{Candidate Variations}: \{\texttt{candidate variations}\} \newline
     \textbf{Criteria for selection}: 
    \begin{flushleft}
        \hspace{15pt}$\bullet$ \textbf{Challenging but Feasible}: The variation should add complexity (e.g., more objects, constraints, or multi-step actions) without being impossible.\\
        \hspace{15pt}$\bullet$ \textbf{High Quality}: Clear, grammatical, and realistic in the ALFWorld context. \\
        \hspace{15pt}$\bullet$ \textbf{Meaningful Variation}: Should involve non-trivial differences in \textit{action type}, \textit{target object}, or \textit{target location}. \\
        \hspace{15pt}$\bullet$ \textbf{Realistic}: The variation must be consistent with ALFWorld’s embodied environment dynamics (e.g., no placing a fridge on a lamp). 
    \end{flushleft}

\textbf{Please respond with}:
    \begin{flushleft}
    \hspace{15pt} 1. The number of your selected variation 1-(\texttt{len(\{candidate variations\})}).\\
    \hspace{15pt} 2. A brief explanation (1-2 sentences) of why this variation is the most challenging and high-quality.
    \end{flushleft}

\textbf{Format your response as}:\newline
SELECTION: [number]\newline
REASONING: [explanation]
}
\end{tabularx}
\end{promptbox}

\begin{promptbox}{Agent Prompt Template
 \,|\, \texttt{ALFWorld}}
\renewcommand{\arraystretch}{1.4}
\begin{tabularx}{\linewidth}{>{\raggedright\arraybackslash}p{0cm} X}
  \EXrow{StateBG}{}{
You are an expert agent operating in the ALFRED Embodied Environment.\newline
Your task is to: \newline \{\texttt{task\_description}\}\newline

Prior to this step, you have already taken \{\texttt{step\_count}\} step(s). Below are the most recent \{\texttt{history\_length}\} observations and the corresponding actions you took: \newline\{\texttt{action\_history}\}.\newline
You are now at step \{\texttt{current\_step}\} and your current observation is: \newline\{\texttt{current\_observation}\}.\newline
Your admissible actions of the current situation are:\newline[\{\texttt{admissible\_actions}\}].\newline

Now it's your turn to take an action.
You should first reason step-by-step about the current situation. This reasoning process MUST be enclosed within \texttt{<think> </think>} tags.\newline
Once you've finished your reasoning, you should choose an admissible action for current step and present it within \texttt{<action> </action>} tags.
   }
\end{tabularx}
\end{promptbox}

\clearpage
\subsection{WebArena}

\begin{promptbox}{Task variation dataset construction
 \,|\, \texttt{WebArena}}
\renewcommand{\arraystretch}{1.4}
\begin{tabularx}{\linewidth}{>{\raggedright\arraybackslash}p{2.5cm} X}
  \EXrow{ExpertBG}{System Prompt:}{
    You are an expert in designing realistic, diverse, and challenging web interaction tasks for AI agents.
    }
  \EXrow{StateBG}{User prompt:}{
    I will provide you with several seed \texttt{WebArena task instructions}. Your job is to generate new task variations from each seed.  
    The variations should keep the \textbf{same} general action type (e.g., search, filter, upvote, navigate, purchase, delete) but \textbf{differ} in target, constraints, or context, making them realistic, challenging, and meaningfully different.  

    \textbf{Seed Instructions}:  
    \{\texttt{list of seed instructions}\}  

    \textbf{Requirements for variations}: 
    \begin{flushleft}
        \hspace{15pt}$\bullet$ \textbf{Action Consistency}: Preserve the same type of action as the seed task.\\
        \hspace{15pt}$\bullet$ \textbf{Meaningful Differences}: Change the entities, filters, domains, time ranges, or constraints so the new task is distinct but natural.\\
        \hspace{15pt}$\bullet$ \textbf{Challenging but Feasible}: The variation should slightly increase reasoning or constraint complexity, but remain solvable.\\
        \hspace{15pt}$\bullet$ \textbf{High Quality}: Grammatically correct, clear, and realistic web tasks.  
    \end{flushleft}

\textbf{Please respond with}:  
\begin{flushleft}
    For each seed instruction, generate [K] new task variations.  
    Format your response as: \\ 
    SEED: [original instruction]\\  
    VARIATIONS:  \\
    \hspace{15pt}1. [variation 1] \\  
    \hspace{15pt}2. [variation 2]  \\
    ...  
\end{flushleft}

\textbf{Example}:  
\begin{flushleft}
    SEED: List products from living room furniture category by descending price.  \\
    VARIATIONS: \\ 
    \hspace{15pt}1. List products from bedroom furniture category by ascending price.  \\
    \hspace{15pt}2. Show me the most expensive three dining tables available online.  \\
    \hspace{15pt}3. Find discounted sofas under \$500 in the living room furniture category.  
\end{flushleft}
}
\end{tabularx}
\end{promptbox}

\begin{promptbox}{High-quality task variation selection
 \,|\, \texttt{WebArena}}
\renewcommand{\arraystretch}{1.4}
\begin{tabularx}{\linewidth}{>{\raggedright\arraybackslash}p{2.5cm} X}
  \EXrow{ExpertBG}{System Prompt:}{
    You are an expert in interactive web task design and AI training data curation.
    }
  \EXrow{StateBG}{User prompt:}{
    You are an expert in web environment task design. I will give you an original WebArena \texttt{task instruction} and several \texttt{candidate variations} of this task. Your job is to select the most challenging yet feasible variation that would be good to train an AI agent to acquire generalizable skills in web interaction.

    \textbf{Original Task}: \{\texttt{task instruction}\} \newline
    \textbf{Candidate Variations}: \{\texttt{candidate variations}\} \newline
    \textbf{Criteria for selection}:
    \begin{flushleft}
        \hspace{15pt}$\bullet$ \textbf{Challenging but Feasible}: The variation should require slightly more reasoning, precision, or constraints than the original, but still be solvable by a web agent.\\
        \hspace{15pt}$\bullet$ \textbf{High Quality}: Clear, grammatical, and realistic within the web environment. \\
        \hspace{15pt}$\bullet$ \textbf{Meaningful Variation}: Keep the same \textit{action type} (e.g., search, navigate, sort, submit, upvote, purchase) as the original, but change the context, target, or condition. \\
        \hspace{15pt}$\bullet$ \textbf{Realistic}: The task should reflect plausible web interactions a user might request.
    \end{flushleft}
        
\textbf{Please respond with}:
    \begin{flushleft}
    \hspace{15pt} 1. The number of your selected variation 1-(\texttt{len(\{candidate variations\})}).\\
    \hspace{15pt} 2. A brief explanation (1-2 sentences) of why this variation is the most challenging and high-quality.
    \end{flushleft}

\textbf{Format your response as}:\newline
SELECTION: [number]\newline
REASONING: [explanation]
}
\end{tabularx}
\end{promptbox}

\begin{promptbox}{AX-tree state mapping prompt
 \,|\, \texttt{WebArena}}
\renewcommand{\arraystretch}{1.4}
\begin{tabularx}{\linewidth}{>{\raggedright\arraybackslash}p{1.5cm} X}
  \EXrow{ExpertBG}{System Prompt:}{
    \small{You are an agent tasked with extracting and refine a subset of the webpage's observations based on the content of the page and user instructions. Perform the following tasks based on the provided [Information source], including user instructions, interaction history, and the AXTree observation at the current time step. First, provide high-level reasoning for the next action by analyzing the provided information. Second, extract relevant webpage elements based on your high-level reasoning.}
    }
  \EXrow{StateBG}{User prompt:}{\small{
  [General instructions]\newline
You are currently on the \{\texttt{domain\_info}\} website.
Your task is to generate a \textbf{Reasoning} and a \textbf{Refined observation} based on the provided inputs.\newline
First, review the \textbf{User instruction} and \textbf{History of interactions} and, then, generate the \textbf{Reasoning}.
Analyze the progress made so far, and provide a rationale for the next steps needed to efficiently accomplish the user instruction on the \{\texttt{domain\_info}\} website.\newline
Second, refine the \textbf{Webpage observation at the current time step} into a \textbf{Refined observation}.
Extract a subset of the webpage observation (e.g., chart, table, menu items) that contains necessary information for completing the user instruction, and explain the extracted elements.
Ensure that the information on the elements (e.g., numeric element ID) is correctly included.\newline
Please follow the format in the [Reasoning \& Refinement example] carefully.\newline

[Information source]\newline
\textbf{User instruction}:
\{\texttt{user instruction}\}\newline
\textbf{History of interactions}:\{\texttt{interaction history}\}\newline
\textbf{Webpage observation at the current time step}:\{\texttt{AXTree observation}\}\newline

[Reasoning \& Refinement example]\newline
\textbf{Abstract example}\newline
Here is an abstract version of the answer, describing the content of each tag.
Make sure you follow this structure and format strictly, but replace the content with your own answer:

\texttt{<reasoning>}\newline
Think step by step. Based on the \textbf{User instruction}, \textbf{History of interaction}, and \textbf{AXTree observation at the current time step}:

\begin{flushleft}
    \hspace{15pt} $\bullet$ Provide a high-level description of the \textbf{AXTree observation at the current time step.} \\
    \hspace{15pt} $\bullet$ Based on the \textbf{User instruction} and \textbf{History of interaction} track your progress and provide your \\
    \hspace{18pt} reasoning on the next action needed to accomplish the \textbf{User instruction}\\
\end{flushleft}

Ensure that:
Structure your reasoning concisely and follow the following format strictly:\newline
\texttt{<content\_description>} High-level description of current page state (max 2 sentences)\texttt{</content\_description>}
\texttt{<agent\_progress>} What has been accomplished so far (max 1 sentence)\texttt{</agent\_progress>}
\texttt{<next\_action\_analysis>} What should happen next and why (max 1 sentence)\texttt{</next\_action\_analysis>}\newline
\texttt{</reasoning>}

\texttt{<extraction>}\newline
Based on your reasoning, identify the elements (e.g., buttons, text fields, static text, table row, chart) to focus on.
Then, explain the semantics and functionalities of each extracted element.
Ensure that:
You do not alter the structure of the AXTree observation.
You extract the element ID (id in [ ]) accurately without any errors.
When extracting chart or table, you must extract the entire chart or table to avoid any confusion or loss of information.
Unless necessary, try not to extract url or non-semantic identifiers which is not informative for the agent actions.
All the elements you extract should be actionable and discard irrelevant elements.
Please follow the following format and do not provide any other text besides the element list.\newline
[ELEMENT\_ID] TYPE:DESCRIPTION\newline
[ELEMENT\_ID] TYPE:DESCRIPTION\newline
...\newline
[ELEMENT\_ID] TYPE:DESCRIPTION\newline
(Extract 3-10 most relevant actionable elements only)\newline
\texttt{</extraction>}
}
}
\end{tabularx}
\end{promptbox}

\begin{promptbox}{Agent Prompt Template
 \,|\, \texttt{WebArena}}
\renewcommand{\arraystretch}{1.4}
\begin{tabularx}{\linewidth}{>{\raggedright\arraybackslash}p{2.5cm} X}
  \EXrow{ExpertBG}{System Prompt:}{
   You are an agent trying to solve a web task based on the content of the page anda user instructions.
You can interact with the page and explore.
Each time you submit an action it will be sent to the browser and you will receive a new page.
    }
  \EXrow{StateBG}{User prompt:}{
  \textbf{Instructions}\newline
    Review the current state of the page and all other information to find the best possible next action to accomplish your goal.
Your answer will be interpreted and executed by a program, make sure to follow the formatting instructions.

\textbf{User instruction}:
\{\texttt{user instruction}\}\newline
\textbf{History of interactions}:\{\texttt{interaction history}\}\newline
\textbf{Refined observation of current step}:
Reasoning
\{\texttt{plan}\}\newline
\textbf{Focused AXTree observation}:
\{\texttt{rep\_observation}\}\newline
\textbf{Action space}:
13 different types of actions are available. 
\begin{flushleft}
   \hspace{15pt}  $\bullet$ noop(wait\_ms: float = 1000) \\
         \hspace{15pt}  \hspace{15pt}  1. \small{Description: Do nothing, and optionally wait for the given time (in\\  \hspace{35pt} milliseconds)}.\\
             \hspace{15pt} \hspace{15pt}  2.\small{ Examples: noop(),noop(500)}\\
    \hspace{15pt}$\bullet$ ... 
\end{flushleft}
To save space, please refer to~\ref{sec:actions_of_webarena} for the full list of actions.\newline

\textbf{Remark:} Only a single action can be provided at once. Example:\texttt{fill('a12', 'example with "quotes"')}
Multiple actions are meant to be executed sequentially without any feedback from the page.
Don't execute multiple actions at once if you need feedback from the page.\newline

\textbf{Abstract Example}\newline
Here is an abstract version of the answer with description of the content of
each tag. Make sure you follow this structure, but replace the content with your
answer:\newline
\texttt{<think>}\newline
Think step by step. If you need to make calculations such as coordinates, write them here. Describe the effect
that your previous action had on the current content of the page.\newline
\texttt{</think>}\newline
\texttt{<action>}\newline
One single action to be executed. You can only use one action at a time.\newline
\texttt{</action>}\newline

\textbf{Concrete Example}\newline
Here is a concrete example of how to format your answer. Make sure to follow the template with proper tags:

\texttt{<think>}\newline
My memory says that I filled the first name and last name, but I can't see any
content in the form. I need to explore different ways to fill the form. Perhaps
the form is not visible yet or some fields are disabled. I need to replan.
\texttt{</think>}

\texttt{<action>}\newline
\texttt{fill('a12', 'example with "quotes"')}\newline
\texttt{</action>}\newline

}
\end{tabularx}
\end{promptbox}

\subsubsection{Action Space of WebArena}\label{sec:actions_of_webarena}
\renewcommand{\arraystretch}{1.4}
\begin{tabularx}{\linewidth}{>{\raggedright\arraybackslash}p{0cm} X}
  \EXrow{StateBG}{}{
  \small{
\begin{flushleft}
   $\bullet$ \textbf{noop}\texttt{(wait\_ms: float = 1000)}\\
   \hspace{15pt} $1$. Description: Do nothing, and optionally wait for the given time (in milliseconds).\\
   \hspace{15pt} $2$.  Examples:
       \texttt{noop();}
       \texttt{noop(500)}\\
    $\bullet$ \textbf{send\_msg\_to\_user}\texttt{(text: str)}\\
    \hspace{15pt} $1$. Description: Send a message to the user. You should send a short answer as a message and do not ask questions through message.\\
    \hspace{15pt} $2$. Examples: \texttt{send\_msg\_to\_user('the city was built in 1751.');}
       \texttt{send\_msg\_to\_user('Yes');}
       \texttt{send\_msg\_to\_user('No');}
       \texttt{send\_msg\_to\_user('31112');}
       \texttt{send\_msg\_to\_user('Yoshua Bengio')}\\
        $\bullet$ \textbf{scroll}\texttt{(delta\_x: float, delta\_y: float)}\\
   \hspace{15pt}1. Description: Scroll horizontally and vertically. Amounts in pixels, positive for right or down scrolling, negative for left or up scrolling. Dispatches a wheel event.\\
    \hspace{15pt}2. Examples:
       \texttt{scroll(0, 200);
       scroll(-50.2, -100.5)}\\
    $\bullet$
   \textbf{fill}\texttt{(bid: str, value: str)}\\
   \hspace{15pt}1. Description: Fill out a form field. It focuses the element and triggers an input event with the entered text. It works for \texttt{<input>}, \texttt{<textarea>} and \texttt{[contenteditable]} elements.\\
   \hspace{15pt}2. Examples:
      \texttt{fill('237', 'example value');
     fill('45', 'multi-line example');
       fill('a12', 'example with "quotes"')}\\
$\bullet$ \textbf{select\_option}\texttt{(bid: str, options: str | list[str])}\\
   \hspace{15pt}1. Description: Select one or multiple options in a \texttt{<select>} element. You can specify option value or label to select. Multiple options can be selected.\\
   \hspace{15pt}2. Examples:
       \texttt{select\_option('48', 'blue');
       select\_option('48', ['red', 'green', 'blue'])}\\
$\bullet$ \textbf{click}\texttt{(bid: str, button: Literal['left', 'middle', 'right'] = 'left', modifiers: list[typing.Literal['Alt', 'Control', 'Meta', 'Shift']] = [])}\\
   \hspace{15pt}1. Description: Click an element.\\
   \hspace{15pt}2. Examples:
       \texttt{click('51');
       click('b22', button='right');
       click('48', button='middle', modifiers=['Shift'])}\\
$\bullet$ \textbf{dblclick}\texttt{(bid: str, button: Literal['left', 'middle', 'right'] = 'left', modifiers: list[typing.Literal['Alt', 'Control', 'Meta', 'Shift']] = [])}\\
   \hspace{15pt}1. Description: Double click an element.\\
   \hspace{15pt}2. Examples:
       \texttt{dblclick('12');
       dblclick('ca42', button='right');
       dblclick('178', button='middle', modifiers=['Shift'])}\\
$\bullet$ \textbf{hover}\texttt{(bid: str)}\\
   \hspace{15pt}1. Description: Hover over an element.\\
   \hspace{15pt}2. Examples: \texttt{hover('b8')}\\
$\bullet$ \textbf{press}\texttt{(bid: str, key\_comb: str)}\\
   \hspace{15pt}1. Description: Focus the matching element and press a combination of keys. It accepts the logical key names that are emitted in the \texttt{keyboardEvent.key} property of the keyboard events: \texttt{Backquote, Minus, Equal, Backslash, Backspace, Tab, Delete, Escape, ArrowDown, End, Enter, Home, Insert, PageDown, PageUp, ArrowRight, ArrowUp, F1 - F12, Digit0 - Digit9, KeyA - KeyZ, etc.} You can alternatively specify a single character you'd like to produce such as \texttt{"a"} or \texttt{"\#"}. Following modification shortcuts are also supported: \texttt{Shift, Control, Alt, Meta}.\\
   \hspace{15pt}2. Examples:
       \texttt{press('88', 'Backspace');
       press('a26', 'Control+a');
       press('a61', 'Meta+Shift+t')}\\
$\bullet$ \textbf{focus}\texttt{(bid: str)}\\
   \hspace{15pt}1. Description: Focus the matching element.\\
   \hspace{15pt}2. Examples:
       \texttt{focus('b455')}\\
$\bullet$ \textbf{clear}\texttt{(bid: str)}\\
   \hspace{15pt}1. Description: Clear the input field.\\
   \hspace{15pt}2. Examples:\texttt{clear('996')}\\

$\bullet$ \textbf{drag\_and\_drop }\texttt{(from\_bid: str, to\_bid: str)}\\
   \hspace{15pt}1. Description: Perform a drag \& drop. Hover the element that will be dragged. Press left mouse button. Move mouse to the element that will receive the drop. Release left mouse button.\\
   \hspace{15pt}2. Examples:
       \texttt{drag\_and\_drop('56', '498')}\\
$\bullet$ \textbf{upload\_file}\texttt{(bid: str, file: str | list[str])}\\
   \hspace{15pt}1. Description: Click an element and wait for a "filechooser" event, then select one or multiple input files for upload. Relative file paths are resolved relative to the current working directory. An empty list clears the selected files.\\
   \hspace{15pt}2. Examples:
      \texttt{upload\_file('572', 'my\_receipt.pdf');
       upload\_file('63', ['/home/bob/Documents/image.jpg', '/home/bob/Documents/file.zip'])}
\end{flushleft}
}
}
\end{tabularx}

\pagebreak
\subsection{Experience Model Judge}

\begin{promptbox}{Judge Prompt for Evaluating Experience Models}\label{prompt:judge}
\renewcommand{\arraystretch}{1.4}
\begin{tabularx}{\linewidth}{>{\raggedright\arraybackslash}p{0cm} X}
  \EXrow{StateBG}{}{
\small{
You are an expert judge for scoring the quality of a predicted state transition in a WebShop environment simulator.\newline
\textbf{You are given}:\newline
- Current state (before the action)\newline
- The agent action \newline
- Predicted next state (after the action) \newline

\textbf{Your task}:\newline
1) Evaluate the predicted next state on \textbf{four rubrics}, each scored $\mathbf{0}$, $\mathbf{1}$, or $\mathbf{2}$.\newline
2) Provide brief step-by-step reasoning for \textbf{each rubric}.\newline
3) Output a \textbf{valid JSON} object with the rubric scores and the total (sum of the four rubrics). Do not include extra fields.\newline

\textbf{General rules}:\newline
- Base your judgment \textbf{only} on the provided inputs; do not assume hidden context.\newline
- Use integers only ($\mathbf{0/1/2}$) for rubric scores.\newline
- If an action is invalid or should not change the page, correct behavior may include a no-op with an explicit failure/empty-result signal.\newline
- Be concise but specific in your reasoning (1–3 sentences per rubric).
---

\textbf{\# \# \# Rubrics} ($\mathbf{0/1/2}$) \textbf{with anchors:}\newline
1) \textbf{Causal State Consistency} | \textit{Question:} Is the predicted next state both logically consistent with the prior state \textbf{and} causally grounded in the agent’s action semantics (e.g., click → detail page, pagination → new results, search → updated listings, back → prior view)?  
\begin{flushleft}
\hspace{15pt}- {$\mathbf{2}$}: Coherent and action-appropriate; all expected updates appear with no contradictions.  \\
\hspace{15pt}- {$\mathbf{1}$}: Mostly consistent, but has minor logical or semantic gaps.  \\
\hspace{15pt}- {$\mathbf{0}$}: Inconsistent or not causally linked to the action. \\
\end{flushleft}

2) \textbf{Diversity \& State Variation} | 
\textit{Question:} Is there a meaningful, non-degenerate change from the prior state (when change is expected)?  
\begin{flushleft}
\hspace{15pt}- {$\mathbf{2}$}: Substantive, coherent differences (new results, updated filters, changed details).  \\
\hspace{15pt}- {$\mathbf{1}$}: Minimal or superficial change.  \\
\hspace{15pt}- {$\mathbf{0}$}: No meaningful change, or incoherent jump.  
\end{flushleft}

3) \textbf{Informativeness} | \textit{Question}: Is the predicted state rich, relevant, and internally coherent (e.g., listings with meaningful attributes; filters aligned with content)?  
\begin{flushleft}
\hspace{15pt}- {$\mathbf{2}$}: Detailed, relevant, and coherent information.  \\
 \hspace{15pt}- {$\mathbf{1}$}: Some useful details, but sparse or partially incoherent.\\  
 \hspace{15pt}- {$\mathbf{0}$}: Uninformative, irrelevant, or incoherent.  
\end{flushleft}

4) \textbf{Hallucination \& Failure Feedback} |
\textit{Question}: When the action is invalid or yields no results, does the state reflect an appropriate failure/empty-result signal instead of hallucinating success?  
\begin{flushleft}
   \hspace{15pt}- {$\mathbf{2}$}: Correctly signals failure or success as appropriate, no hallucination.  \\
 \hspace{15pt}- {$\mathbf{1}$}: Partial/ambiguous handling of failure.  \\
 \hspace{15pt}- {$\mathbf{0}$}: Hallucinates success or ignores failure.   
\end{flushleft}

---

\textbf{\#\#\# Step-by-step Evaluation (use this structure):}

1. \textbf{Causal State Consistency:}
\texttt{<your reasoning>}
\textbf{Score:} \texttt{0/1/2}

2. \textbf{Diversity \& State Variation:}  
\texttt{<your reasoning>}
\textbf{Score:} \texttt{0/1/2} 

3. \textbf{Informativeness}: 
\texttt{<your reasoning>}
\textbf{Score:} \texttt{0/1/2}  

4. \textbf{Hallucination \& Failure Feedback}: 
\texttt{<your reasoning>}
\textbf{Score:} \texttt{0/1/2}  

---

\textbf{\# \# \# Final JSON Output:}\newline
Output a single valid JSON object. Replace angle brackets with integers only.
\begin{flushleft}
   \texttt{\{"rubric\_scores": \{\\
     "causal\_consistency": <0|1|2>,
    "diversity": <0|1|2>,
    "informativeness": <0|1|2>,
     "hallucination": <0|1|2>
   \}\}}
\end{flushleft}
}

}
\end{tabularx}
\end{promptbox}

\end{document}